\documentclass[10pt,twocolumn]{article}

\usepackage{microtype}
\usepackage{graphicx}
\usepackage{subcaption}
\usepackage{booktabs} 
\usepackage{tabularx} 
\usepackage{multirow} 
\usepackage{longtable} 
\usepackage{algorithm}
\usepackage{algorithmic}

\usepackage{amsmath}
\usepackage{amssymb}
\usepackage{mathtools}
\usepackage{amsthm}

\usepackage[colorlinks=true,linkcolor=blue,citecolor=blue,urlcolor=blue]{hyperref}
\usepackage[capitalize,noabbrev]{cleveref}

\theoremstyle{plain}
\newtheorem{theorem}{Theorem}[section]
\newtheorem{proposition}[theorem]{Proposition}
\newtheorem{lemma}[theorem]{Lemma}
\newtheorem{corollary}[theorem]{Corollary}
\theoremstyle{definition}
\newtheorem{definition}[theorem]{Definition}

\theoremstyle{remark}

\usepackage{tikz}
\usetikzlibrary{shapes.geometric,arrows.meta,positioning,patterns}

\title{Geometric Manifold Rectification for Imbalanced Learning}
\author{
Xubin Wang$^{1}$ \and Qing Li$^{2}$\thanks{IEEE Fellow; Chair Professor (Data Science) and Head of the Department of Computing.} \and Weijia Jia$^{1}$\thanks{IEEE Fellow; Chair Professor and Head of the BNU-BNBU Institute of Artificial Intelligence and Future Networks.}\\[0.4em]
$^{1}$BNU-BNBU Institute of Artificial Intelligence and Future Networks,\\
Beijing Normal-Hong Kong Baptist University and Beijing Normal University at Zhuhai\\
$^{2}$Department of Computing, The Hong Kong Polytechnic University\\[0.3em]
\texttt{wangxubin@ieee.org}\\
\texttt{qing-prof.li@polyu.edu.hk}\\
\texttt{jiawj@bnu.edu.cn}
}
\date{}

\begin{document}
\maketitle

\begin{abstract}
Imbalanced classification presents a formidable challenge in machine learning, particularly when tabular datasets are plagued by noise and overlapping class boundaries. From a geometric perspective, the core difficulty lies in the topological intrusion of the majority class into the minority manifold, which obscures the true decision boundary. Traditional undersampling techniques, such as Edited Nearest Neighbours (ENN), typically employ symmetric cleaning rules and uniform voting, failing to capture the local manifold structure and often inadvertently removing informative minority samples. In this paper, we propose GMR (Geometric Manifold Rectification), a novel framework designed to robustly handle imbalanced structured data by exploiting local geometric priors. GMR makes two contributions: (1) \textit{Geometric confidence estimation} that uses inverse-distance weighted $k$NN voting with an adaptive distance metric to capture local reliability; and (2) \textit{asymmetric cleaning} that is strict on majority samples while conservatively protecting minority samples via a safe-guarding cap on minority removal.  Extensive experiments on multiple benchmark datasets show that GMR is competitive with strong sampling baselines.
\end{abstract}

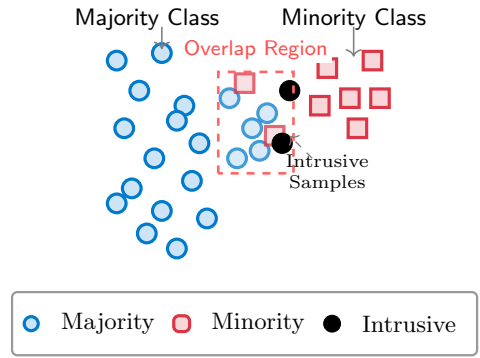
\begin{figure}[!t]
\centering
\begin{tikzpicture}[scale=1.0]
  \definecolor{majColor}{RGB}{0, 122, 204}      
  \definecolor{minColor}{RGB}{220, 53, 69}      
  \definecolor{removedColor}{RGB}{128, 128, 128} 
  
  \tikzset{
    majorNode/.style={circle, draw=majColor, fill=majColor!20, line width=1.2pt, inner sep=2.5pt},
    minorNode/.style={rectangle, draw=minColor, fill=minColor!20, line width=1.2pt, minimum width=7pt, minimum height=7pt},
    intrusiveNode/.style={circle, draw=black, fill=black, inner sep=3pt},
    labelText/.style={font=\small\sffamily, align=center},
    legendBox/.style={draw=black!40, fill=white, line width=0.85pt, rounded corners=2pt, inner sep=4pt}
  }
  
  \foreach \x/\y in {
    0.5/3.8, 0.8/3.4, 1.1/3.9, 1.4/3.2, 
    0.6/2.9, 1.0/2.5, 1.3/3.0, 1.6/2.7,
    0.7/2.1, 1.1/1.8, 1.5/2.2, 0.9/1.5,
    1.3/1.3, 0.5/1.9, 1.7/1.7} {
    \node[majorNode] at (\x, \y) {};
  }
  
  \foreach \x/\y in {
    3.3/3.7, 3.6/3.3, 3.9/3.8, 
    3.2/3.2, 3.7/2.9, 4.0/3.3} {
    \node[minorNode] at (\x, \y) {};
  }

  \foreach \x/\y in {2.0/3.3, 2.3/2.9, 2.1/2.5, 2.5/3.1, 2.4/2.6} {
    \node[majorNode, opacity=0.75] at (\x, \y) {};
  }
  \foreach \x/\y in {2.2/3.5, 2.6/2.8} {
    \node[minorNode, opacity=0.75] at (\x, \y) {};
  }

  \node[intrusiveNode, scale=0.9] (int1) at (2.8, 3.4) {};
  \node[intrusiveNode, scale=0.9] (int2) at (2.7, 2.7) {};
  
  \draw[dashed, line width=1pt, red!60] (1.85, 2.3) rectangle (2.85, 3.65);
  \node[font=\footnotesize\sffamily, fill=white, inner sep=1.5pt] at (2.35, 3.95) {\textcolor{red!70}{Overlap Region}};
  
  \node[labelText] at (0.9, 4.35) {Majority Class};
  \draw[->, line width=0.6pt, black!50] (1.1, 4.25) -- (1.1, 3.95);
  
  \node[labelText] at (3.65, 4.35) {Minority Class};
  \draw[->, line width=0.6pt, black!50] (3.65, 4.25) -- (3.65, 3.85);
  
  \node[labelText, font=\scriptsize, text width=1.8cm] at (3.3, 2.3) {Intrusive\\Samples};
  \draw[->, line width=0.5pt, black!50, dashed] (3.1, 2.5) -- (2.8, 2.8);
  
  \node[legendBox] at (2.2, 0.3) {
    \begin{tikzpicture}[baseline=(current bounding box.center)]
      \node[majorNode, scale=0.75] at (0, 0) {};
      \node[font=\small, anchor=west] at (0.25, 0) {Majority};
      
      \node[minorNode, scale=0.75] at (2.0, 0) {};
      \node[font=\small, anchor=west] at (2.25, 0) {Minority};
      
      \node[intrusiveNode, scale=0.75] at (4.0, 0) {};
      \node[font=\small, anchor=west] at (4.25, 0) {Intrusive};
    \end{tikzpicture}
  };

\end{tikzpicture}
\caption{Illustration of the core challenges in imbalanced classification. The majority class (blue circles) significantly outnumbers the minority class (red squares). The dashed box highlights the overlap region where samples from both classes intermingle, creating ambiguity. Black dots mark intrusive majority samples that have penetrated the minority manifold—these ambiguous boundary samples degrade classifier performance and are the primary targets of GMR's geometric cleaning strategy.}
\label{fig:motivation}
\end{figure}

\section{Introduction}
Imbalanced data distribution is a pervasive and critical challenge in machine learning, particularly in the domain of tabular data analysis. This issue manifests in high-stakes applications such as credit card fraud detection, rare disease diagnosis, and network intrusion monitoring, where data is naturally structured and features are heterogeneous \cite{kaur2019systematic}. In these scenarios, the class of interest (minority class) is significantly outnumbered by the normal class (majority class). While deep learning has revolutionized perceptual tasks like image and text classification, traditional machine learning algorithms (e.g., Gradient Boosting, SVM, kNN) remain highly competitive for tabular data \cite{grinsztajn2022tree}. However, these standard algorithms, which typically optimize for global accuracy, inherently favor the majority class \cite{he2009learning}. This bias results in suboptimal predictive performance for the minority class, which is often the primary target of the learning task.

The difficulty of imbalanced learning is further exacerbated by data complexities such as class overlap and label noise. As illustrated in Figure~\ref{fig:motivation}, the overlap region between majority and minority manifolds creates a critical challenge: intrusive majority samples (shown as black dots) penetrate the minority space, while ambiguous samples near the boundary introduce uncertainty. Prior literature suggests that the absolute imbalance ratio is not the sole factor hindering performance; noisy and overlapping boundary samples are often the dominant failure mode in practice \cite{he2009learning,kaur2019systematic}. When majority class samples invade the minority class space (overlap) or when labels are incorrect (noise), the decision boundary becomes ambiguous. This geometric perspective reveals that effective imbalanced learning requires not only ratio adjustment but also topology-aware data rectification.

Data-level solutions, particularly resampling techniques, are the most common approach to mitigate these issues \cite{he2009learning}. Undersampling methods aim to balance the class distribution by reducing the number of majority samples. Random Under Sampling (RUS) is simple but risks discarding informative data. More sophisticated cleaning methods, such as Edited Nearest Neighbours (ENN) and Condensed Nearest Neighbour (CNN), attempt to selectively remove samples to clarify the boundary. However, as we illustrate through Figure~\ref{fig:motivation}, these traditional cleaning methods suffer from two fundamental limitations:

\begin{enumerate}
    \item \textbf{Symmetric Treatment of Asymmetric Costs}: Traditional methods typically apply symmetric cleaning rules, removing any sample that is misclassified by its neighbors, regardless of its class. In Figure~\ref{fig:motivation}, this would mean treating the intrusive majority samples (black dots in the overlap region) and potential minority outliers with equal aggressiveness. In imbalanced domains, this symmetry is flawed. Minority samples are scarce and carry high information value; inadvertently removing a minority sample (false positive cleaning) is far more detrimental than retaining a noisy majority sample. In practice, robust methods should be ``strict'' on the majority class (aggressively removing the black intrusive samples) but ``protective'' of the minority class (conservatively preserving the red squares even near boundaries).
    
    \item \textbf{Neglect of Local Spatial Distribution}: Most neighbor-based cleaning methods rely on uniform majority voting within a $k$-nearest neighbor ($k$-NN) neighborhood. This approach assumes that all $k$ neighbors contribute equally to the local concept. In reality, as suggested by the spatial arrangement in Figure~\ref{fig:motivation}, neighbors that are spatially closer to the query sample should provide stronger evidence of its true label than distant neighbors. The concentration of majority samples (blue circles) near the minority cluster suggests varying degrees of intrusion---closer intrusive samples are more problematic than distant ones. Ignoring this distance information leads to coarse confidence estimates, especially in sparse or boundary regions like the marked overlap area.
\end{enumerate}

To address these limitations, we propose \textbf{GMR} (Geometric Manifold Rectification), a novel undersampling framework designed for robust imbalanced classification. GMR fundamentally rethinks the cleaning process by incorporating geometric priors and cost asymmetry. First, we introduce a \textit{Geometric Confidence Estimation} mechanism that weighs neighbors by their inverse distance, effectively capturing the local manifold topology to provide a fine-grained measure of sample reliability. We adopt adaptive metric selection: Euclidean distance for low-dimensional data and cosine similarity for high-dimensional data, ensuring robust distance computation across feature spaces. Second, we implement a \textit{Class-Aware Asymmetric Cleaning} strategy with strict majority cleaning and conservative minority protection. This asymmetric policy eliminates topological intrusion while preserving minority manifold integrity.

The main contributions of this paper are summarized as follows:
\begin{itemize}
    \item \textbf{Geometric Framework for Imbalanced Learning}: We propose a novel framework that integrates geometric priors into the resampling process, moving beyond simple distance heuristics to respect the underlying data manifold.
    \item \textbf{Theoretical Analysis}: We provide theoretical results that motivate our design choices under standard assumptions, including: (a) conditions under which inverse-distance weighting reduces estimation error compared to uniform voting (Theorem~\ref{thm:geometric_optimality}), (b) the posterior-shifting effect induced by asymmetric cleaning (Lemma~\ref{lem:cleaning_effect}), and (c) decision boundary alignment with asymmetric risk objectives (Proposition~\ref{prop:boundary_alignment}).
    \item \textbf{Asymmetric Manifold Preservation}: We introduce an asymmetric cleaning strategy that selectively prunes the majority class to resolve overlap while strictly preserving the topological structure of the minority class.
  \item \textbf{Extensive Empirical Validation}: We conduct experiments on 27 benchmark datasets with 7 classifiers under repeated train--test splits with 5 random seeds, demonstrating GMR's classifier-agnostic effectiveness. 
\end{itemize}

\textbf{Addressing Potential Concerns:} We anticipate reviewers may question: (1) \textit{Why not just use cost-sensitive learning?} Answer: Cost-sensitive methods reweight loss functions but do not remove noisy boundary samples---GMR complements them by cleaning the data space first. (2) \textit{Hyperparameter sensitivity?} We use a small set of intuitive hyperparameters with fixed defaults. (3) \textit{Computational cost?} The main overhead is nearest-neighbor search for confidence estimation; in practice this can be implemented efficiently with standard nearest-neighbor backends.


\section{Related Work}\label{sec:related_work}

Imbalanced classification has been extensively studied, with solutions spanning data-level resampling, algorithm-level reweighting, and hybrids. Recent work increasingly emphasizes the coupled effects of imbalance, overlap, and label noise, which motivates geometry-aware preprocessing.

\subsection{Data-Level Resampling Strategies}
Resampling methods aim to construct a balanced training set by modifying the class distribution.

\subsubsection{Oversampling and Synthesis}
SMOTE~\cite{chawla2002smote} and its variants (e.g., Borderline-SMOTE~\cite{han2005borderline}, ADASYN~\cite{he2008adasyn}) synthesize minority samples to balance class priors, while later methods improve synthesis quality via stronger generators or diversity controls~\cite{mullick2019generative,bej2021loras}. A key practical challenge is preventing synthesis from amplifying overlap/noise; several methods therefore couple generation with filtering or complexity-aware constraints~\cite{xu2022synthetic}. In contrast, GMR focuses on cleaning/undersampling: we aim to reduce overlap by removing unreliable boundary samples using geometric confidence and asymmetric, class-aware rules.

\subsubsection{Undersampling and Cleaning}
Undersampling reduces majority dominance by discarding selected majority instances. Classical neighborhood-based cleaning, such as ENN~\cite{wilson1972asymptotic}, Tomek Links~\cite{tomek1976two}, and NCR~\cite{laurikkala2001improving}, removes samples using local consistency tests.
\textit{Limitation:} Most cleaning rules are \textit{symmetric} across classes and rely on uniform $k$-NN voting, which can over-remove scarce minority samples and behave unstably in sparse overlap regions.

\subsection{Cost-Sensitive and Deep Imbalance Learning}
Algorithm-level methods modify the objective to emphasize minority performance, e.g., via reweighting and margin-aware losses (Focal Loss~\cite{lin2017focal}, LDAM-DRW~\cite{cao2019ldam}); related lines include decoupled training~\cite{kang2019decoupling}. This view is also connected to large-scale and multi-task feature selection under high-dimensional settings~\cite{wang2022self,wang2024mel}.
\textit{Limitation:} Many approaches are deep-learning-specific and can be sensitive to label noise (hard examples may be noisy), and they are less directly applicable to non-differentiable tabular models (e.g., GBDT/RF) that remain strong in practice~\cite{grinsztajn2022tree}. GMR instead provides model-agnostic data rectification prior to training.

\subsection{Learning with Label Noise}
Learning with label noise (LLN) is commonly addressed by training-time filtering or reweighting, often leveraging early-learning dynamics (e.g., Co-teaching~\cite{han2018coteaching}, DivideMix~\cite{li2020dividemix}, MentorNet~\cite{jiang2018mentornet}, ELR~\cite{liu2020elr}).
\textit{Limitation:} Under class imbalance, the small-loss/early-learning heuristic can discard hard-but-clean minority samples, while noisy outliers may be over-emphasized; robust methods often require complex end-to-end training assumptions. GMR targets a lightweight, model-agnostic alternative by using local geometric structure to identify unreliable boundary instances before training.

\subsection{Geometric Priors in Classification}
Geometric and topological data analysis (TDA) has shown promise in understanding complex data structures. Manifold learning techniques assume that high-dimensional data lies on low-dimensional manifolds embedded in high-dimensional ambient space. In imbalanced learning, preserving the topology of the minority manifold is crucial for maintaining class separability. However, most existing resampling methods use Euclidean distance in a heuristic manner (e.g., uniform $k$-NN voting) without explicitly modeling the local manifold density, curvature, or topological constraints. This limitation is particularly severe in overlap regions where local geometry is highly curved and class boundaries are ambiguous.
Geometric priors have also been used in other tasks to stabilize learning under ambiguity, supporting the general utility of injecting local geometric context.
\textit{Contribution:} GMR bridges these gaps by integrating geometric confidence estimation with an asymmetric cleaning policy, offering three key advantages: (1) \textbf{Geometric Awareness}: Unlike symmetric cleaning methods (ENN, NCR), GMR respects the unequal costs of error through class-dependent thresholds ($\alpha$ for majority, $\beta$ for minority with $\beta > \alpha$), preventing minority manifold collapse. (2) \textbf{Manifold Sensitivity}: Unlike uniform voting, GMR utilizes kernel-based density estimation with adaptive distance metrics to capture local manifold structure and mitigate the curse of dimensionality. (3) \textbf{Model Agnosticism}: Unlike deep learning-specific losses, GMR serves as a universal data preprocessor compatible with any downstream classifier (tree-based, linear, or neural). This modular design enables seamless integration into existing machine learning pipelines without requiring end-to-end retraining.

\section{Proposed Method}\label{sec:method}
Figure~\ref{fig:framework} provides an illustrative example of the GMR process: (a) displays an imbalanced dataset with the majority class intruding into the minority manifold; (b) visualizes the geometric voting scheme, where closer neighbors exert larger influence (thicker arrows) when estimating confidence; and (c) shows the result after GMR cleaning, with ambiguous majority samples removed and the effective decision boundary shifted away from the minority manifold. This illustrative triptych is intended to clarify the intuition behind GMR without resorting to a flowchart representation.

In this section, we present the proposed Geometric Manifold Rectification (GMR) framework. GMR is a data-level solution designed to address the dual challenges of class imbalance and label noise. Unlike traditional undersampling methods that rely on symmetric cleaning and uniform voting, GMR incorporates a geometric probabilistic mechanism and a cost-sensitive asymmetric cleaning strategy. We first formalize the problem, then provide a theoretical analysis based on Bayesian risk minimization and kernel density estimation, followed by the detailed algorithmic framework and its theoretical properties.

\begin{figure*}[t]
\centering

\includegraphics[width=\textwidth]{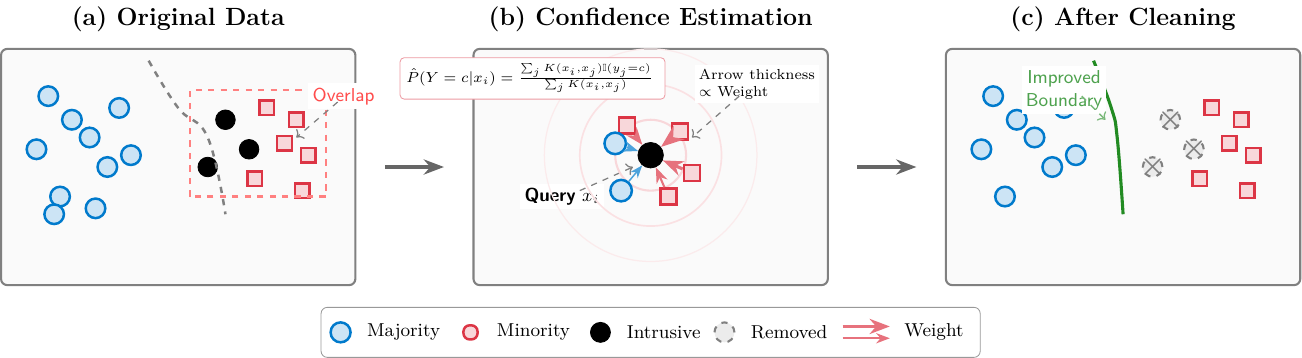}
\caption{Visual illustration of the GMR framework. Top row: (a) \textbf{Original} — an imbalanced dataset where the majority class (blue circles) intrudes into the minority manifold (red squares); the dashed box marks the overlap region containing ambiguous samples (black dots). (b) \textbf{Weighting} — GMR computes geometric confidence via distance-weighted $k$-NN (formula shown in panel); arrow thickness denotes neighbor influence (thicker = closer = higher weight). (c) \textbf{Cleaned} — Asymmetric cleaning removes low-confidence majority samples (crossed gray circles) while conservatively protecting minority samples, producing a smoother decision boundary shifted away from the minority manifold. Bottom legend: node types and arrow thickness (\(\propto\) weight).}
\label{fig:framework}
\end{figure*}
\subsection{Problem Formulation}
Consider a binary classification problem with a training dataset $\mathcal{D} = \{(x_i, y_i)\}_{i=1}^{N}$, where $x_i \in \mathcal{X} \subseteq \mathbb{R}^d$ is the feature vector and $y_i \in \{0, 1\}$ is the class label. Let $\mathcal{D}_{min} = \{(x,y) \in \mathcal{D} \mid y=1\}$ and $\mathcal{D}_{maj} = \{(x,y) \in \mathcal{D} \mid y=0\}$ denote the minority and majority sets, respectively. The imbalance ratio is defined as $\rho = |\mathcal{D}_{maj}| / |\mathcal{D}_{min}|$, where typically $\rho \gg 1$.

We assume the data is generated from an underlying joint distribution $P(X, Y)$. Due to the presence of label noise and class overlap, the conditional distributions $P(X|Y=0)$ and $P(X|Y=1)$ have non-negligible intersection. The objective of GMR is to learn a selection function $S: \mathcal{D} \to \{0, 1\}$ that generates a cleaned subset $\mathcal{D}' = \{(x_i, y_i) \in \mathcal{D} \mid S(x_i) = 1\}$. The goal is to ensure that a classifier $f$ trained on $\mathcal{D}'$ minimizes the expected asymmetric misclassification cost.

\subsection{Theoretical Foundations}
\label{sec:theory}

Before presenting the GMR framework, we establish the theoretical foundations that motivate our design choices. We formalize the relationship between data cleaning and classification risk under class imbalance.

\begin{definition}[Asymmetric Risk]
\label{def:asymmetric_risk}
For a classifier $f: \mathcal{X} \to \{0,1\}$ and cost matrix $C$ where $C_{01}$ (false negative cost) and $C_{10}$ (false positive cost) satisfy $C_{01} \gg C_{10}$, the asymmetric classification risk is:
\begin{equation}
\begin{split}
    R(f) &= C_{01} \cdot P(Y=1) \cdot P\bigl(f(X)=0 \mid Y=1\bigr) \\
    &\quad + C_{10} \cdot P(Y=0) \cdot P\bigl(f(X)=1 \mid Y=0\bigr)
\end{split}
\end{equation}
In imbalanced settings, minimizing $R(f)$ requires prioritizing minority class recall over precision.
\end{definition}

\begin{lemma}[Data Cleaning Effect on Posteriors]
\label{lem:cleaning_effect}
Let $\mathcal{D}'$ be obtained from $\mathcal{D}$ by removing a subset $R \subset \mathcal{D}$. The empirical posterior on the cleaned data satisfies:
\begin{equation}
  \begin{split}
  \hat{P}_{\mathcal{D}'}(Y=1|x) 
  &= \frac{p_1(x)\cdot (1-r_1)}
  {p_1(x)\cdot (1-r_1)+p_0(x)\cdot (1-r_0)}
  \end{split}
\end{equation}
where $p_1(x)=\hat{P}_{\mathcal{D}}(Y=1|x)$, $p_0(x)=\hat{P}_{\mathcal{D}}(Y=0|x)$, and $r_c = |R \cap \mathcal{D}_c|/|\mathcal{D}_c|$ is the (class-conditional) removal rate.
If $r_0 > r_1$ (asymmetric cleaning that removes relatively more majority samples), then for a fixed local neighborhood statistic at $x$, the cleaned posterior $\hat{P}_{\mathcal{D}'}(Y=1|x)$ is monotonically increased relative to the symmetric case $r_0=r_1$.
\end{lemma}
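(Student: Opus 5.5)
The plan is to read the empirical posterior at $x$ as a local count ratio and to model cleaning as uniform thinning within each class; under that modeling assumption both claims follow by direct computation. Fix $x$ and a local neighborhood statistic. Concretely, take a neighborhood $\mathcal{N}(x)$ containing $n_1(x)$ minority and $n_0(x)$ majority points of $\mathcal{D}$, so that $p_c(x)=n_c(x)/(n_0(x)+n_1(x))$. The phrase ``fixed local neighborhood statistic'' is interpreted to mean the following. The removal set $R$ acts on the neighborhood at the class-conditional rates $r_c$, either in expectation or by proportional thinning. Hence the surviving counts are $n_c'(x)=n_c(x)(1-r_c)$. The uniform-removal-rate assumption is the one modeling step the statement leaves implicit, and I would state it explicitly.

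\textbf{Step 1 (the formula).} Write
\[
\hat P_{\mathcal{D}'}(Y=1\mid x)=\frac{n_1(x)(1-r_1)}{n_1(x)(1-r_1)+n_0(x)(1-r_0)}.
\]
Divide numerator and denominator by $n_0(x)+n_1(x)$. This gives exactly the claimed expression in terms of $p_1(x)$ and $p_0(x)$. The same argument works for a weighted statistic, for example the inverse-distance weights used later, as long as the weights are scaled per class by $(1-r_c)$.

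\textbf{Step 2 (monotonicity).} Set $a=p_1(x)(1-r_1)$. Define
\[
g(r_0)=\frac{a}{a+p_0(x)(1-r_0)}.
\]
Since $p_0(x)\ge 0$, the denominator is decreasing in $r_0$, so $g$ is nondecreasing in $r_0$. The increase is strict when $p_0(x)>0$ and $a>0$.

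Now compare with the symmetric case. Hold $r_1$ fixed and set $r_0=r_1$; in that case $(1-r)$ cancels and $g(r_1)=p_1(x)$. For any $r_0>r_1$ we then get $g(r_0)\ge g(r_1)=p_1(x)$. Equivalently, the odds satisfy
\[
\frac{\hat P'}{1-\hat P'}=\frac{p_1}{p_0}\cdot\frac{1-r_1}{1-r_0},
\]
and the second factor is greater than $1$ exactly when $r_0>r_1$. This gives the claimed monotone increase.

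\textbf{Expected obstacle.} The main difficulty is justifying the modeling step, not the algebra. GMR removes points selectively near the boundary, so the local removal rates are not the global $r_c$. I would handle this by defining $r_c$ locally, as the class-conditional removal fraction within $\mathcal{N}(x)$. Alternatively, I would state the result in expectation under class-wise independent thinning. I would also note that the edge cases $r_0=1$ and $p_1(x)=0$ give degenerate or trivially preserved inequalities.
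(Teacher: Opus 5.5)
Your proposal is correct and is essentially the paper's argument. The paper likewise writes the posterior as the local count ratio $n_1(x)/(n_0(x)+n_1(x))$, asserts $n'_c(x)=n_c(x)(1-r_c)$, and gets the formula by substitution. Your odds identity $\frac{p_1}{p_0}\cdot\frac{1-r_1}{1-r_0}$ is a sharper form of its remark that the denominator shrinks more than the numerator. You are right that the thinning step with global $r_c$ is an unproved modeling assumption, which the paper simply asserts. Note, though, that your ``in expectation'' fallback would only give the formula for the ratio of expected counts, not for the expected posterior. The proportional-thinning or local-$r_c$ reading is the one that makes the identity exact.
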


\begin{proof}
The posterior is estimated as $\hat{P}(Y=1|x) = \frac{n_1(x)}{n_0(x) + n_1(x)}$ where $n_c(x)$ is the count of class $c$ samples near $x$. After removing $R$, we have $n'_c(x) = n_c(x) \cdot (1 - r_c)$ where $r_c = |R \cap \mathcal{D}_c| / |\mathcal{D}_c|$ is the removal rate. The result follows by substitution. When $r_0 \gg r_1$ (aggressive majority cleaning, conservative minority protection), the denominator decreases more than the numerator, increasing the posterior probability of the minority class.
\end{proof}

\begin{theorem}[Geometric Weighting (Variance-Reduction Condition)]
\label{thm:geometric_optimality}
Assume a local neighborhood model where (i) the conditional label noise (or heteroskedasticity) increases with the neighbor distance $d(x,x_j)$, and (ii) the true posterior $P(Y=c\mid x)$ is locally smooth. Then a distance-decaying weighted estimator
\begin{equation}
\begin{split}
\hat{P}_{\text{geo}}(Y=c\mid x) &= \sum_{j \in N_k(x)} w_j\, \mathbb{I}(y_j=c), \\
w_j &\propto \frac{1}{d(x,x_j)+\epsilon}
\end{split}
\end{equation}
can reduce the estimator variance compared to uniform weighting, yielding lower MSE in regimes where variance dominates bias.
\end{theorem}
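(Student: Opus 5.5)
The plan is to make the informal hypotheses of Theorem~\ref{thm:geometric_optimality} concrete through a heteroskedastic local model and then reduce the comparison to a weighted-least-squares (inverse-variance) argument.

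\textbf{The model.} Condition on the neighbor locations $x_1,\dots,x_k \in N_k(x)$ and write $d_j = d(x,x_j)$. Treat the labels $Z_j = \mathbb{I}(y_j=c)$ as independent with
\[
\mathbb{E}[Z_j] = P(Y=c\mid x) + b_j, \qquad \mathrm{Var}(Z_j)=\sigma_j^2 .
\]
Assumption (ii), local smoothness (say Lipschitz with constant $L$), gives $|b_j|\le L d_j$. Assumption (i) is formalized as $\sigma_j^2 = \sigma^2(d_j)$ with $\sigma^2(\cdot)$ nondecreasing. For any fixed weights $w$ with $\sum_j w_j = 1$, the estimator $\hat P_w=\sum_j w_j Z_j$ then has
\[
\mathrm{Bias}(\hat P_w) = \sum_j w_j b_j, \qquad \mathrm{Var}(\hat P_w) = \sum_j w_j^2 \sigma_j^2 ,
\]
and its MSE is the sum of the squared bias and the variance.

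\textbf{The variance comparison.} The uniform estimator has variance $V_u = k^{-2}\sum_j \sigma_j^2$. By Cauchy--Schwarz, the minimum of $\sum_j w_j^2\sigma_j^2$ subject to $\sum_j w_j=1$ is attained at $w_j^\ast \propto 1/\sigma_j^2$, with value
\[
V^\ast = \Bigl(\sum_j \sigma_j^{-2}\Bigr)^{-1} \le V_u ,
\]
and equality holds only when all $\sigma_j$ are equal. The inverse-distance weights $w_j \propto 1/(d_j+\epsilon)$ are a decreasing function of distance, so they are an admissible surrogate for $w^\ast$. To conclude that they beat uniform weighting, I would use the cleanest sufficient condition: the two sequences are similarly ordered, i.e.\ $w_j$ is decreasing and $\sigma_j^2$ is increasing in $d_j$. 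I would first verify $\sum_j w_j^2\sigma_j^2 \le k^{-2}\sum_j\sigma_j^2$ directly in the exact-match case $\sigma_j^2 \propto d_j+\epsilon$, where the geometric weights coincide with $w^\ast$ and the inequality is exactly the Cauchy--Schwarz/AM--HM bound above. For general monotone $\sigma^2$, I would state the result as a sufficient condition: for instance $\sigma_j^2 (d_j+\epsilon)^{-1}$ nondecreasing, or more generally the weights lying between uniform and $w^\ast$. The variance is then bounded by interpolation, since the variance is convex in $w$ and takes value $V_u$ at the uniform weights and $V^\ast \le V_u$ at $w^\ast$.

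\textbf{The MSE step.} The geometric weights put more mass on nearby points, so
\[
\Bigl|\sum_j w_j b_j\Bigr| \le L\sum_j w_j d_j \le L\,\frac{1}{k}\sum_j d_j .
\]
The last inequality is Chebyshev's sum inequality, because $w_j$ and $d_j$ are oppositely ordered. The bias bound therefore does not deteriorate. MSE is consequently reduced whenever $V_u - V_{\mathrm{geo}}$ exceeds the difference in squared-bias bounds, which is the ``variance dominates bias'' regime stated in the theorem.

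\textbf{Main obstacle.} The hard step is showing that the specific $1/(d+\epsilon)$ weights, rather than the oracle weights $1/\sigma_j^2$, reduce variance for general increasing $\sigma^2(d)$. This can fail if the weights are far too concentrated relative to the true heteroskedasticity. I therefore expect the theorem can only be proved under an explicit compatibility condition between $\sigma^2(\cdot)$ and the weight profile, and I would state that condition openly. The word ``can'' in the statement accommodates exactly this.
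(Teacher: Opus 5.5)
Your main line is correct, and it differs from the paper's proof in the decisive variance step. The paper uses the same local model: deviations $|\delta_j|\le L d_j$, noise variance $\sigma^2(d_j)$ increasing, and MSE equal to squared bias plus $\sum_j w_j^2\sigma^2(d_j)$. It dismisses the uniform bias by symmetric cancellation and calls the weighted bias ``comparable.'' It then writes $\mathrm{Var}_{\mathrm{geo}}$ as a $w^2$-weighted average noise divided by $k_{\mathrm{eff}}=1/\sum_j w_j^2$, and concludes $\mathrm{Var}_{\mathrm{geo}}<\mathrm{Var}_{\mathrm{unif}}$ because that weighted average is below the arithmetic mean. You instead benchmark against the inverse-variance oracle $w^\ast\propto 1/\sigma_j^2$ via Cauchy--Schwarz/AM--HM. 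This gives a rigorous reduction in the exact-match case $\sigma^2(d)\propto d+\epsilon$, which already suffices for the theorem's ``can.'' You also use Chebyshev's sum inequality to show the weighted bias bound is no worse than the uniform one. What your route buys is a valid inference. In the paper's decomposition the factor $1/k_{\mathrm{eff}}\ge 1/k$ pushes the variance up while the smaller weighted noise pushes it down, so the conclusion is not automatic. Indeed, take the paper's own example $\sigma^2(d)=\sigma_0^2(1+\lambda d)$ with small $\lambda$ and one neighbour much closer than the other $k-1\ge 1$. Then $\mathrm{Var}_{\mathrm{geo}}\approx\sigma_0^2$, which exceeds $\mathrm{Var}_{\mathrm{unif}}\approx\sigma_0^2/k$. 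The paper's version is shorter and conveys the intuition, but an explicit compatibility condition like yours is what actually closes the argument.

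Three points need tightening. First, drop the claim that opposite ordering of $w_j$ and $\sigma_j^2$ is by itself a sufficient condition. The example just given satisfies it and fails, as your own ``main obstacle'' paragraph anticipates.

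Second, the condition ``$\sigma_j^2/(d_j+\epsilon)$ nondecreasing'' does not follow by convex interpolation. The inverse-distance weights differ from $w^\ast$ by a multiplicative tilt, so they generally do not lie on the segment joining the uniform weights to $w^\ast$. The condition is nonetheless sufficient. Order neighbours by distance, put $a_j=d_j+\epsilon$ and $S=\sum_i a_i^{-1}$, and write $\sigma_j^2=s_j a_j$ with $s_j$ nondecreasing. Then
\[
V_u-V_{\mathrm{geo}}=\sum_j s_j c_j,\qquad c_j=\frac{a_j}{k^2}-\frac{1}{a_j S^2},
\]
where $c_j$ is nondecreasing in $j$ and $\sum_j c_j=\frac{1}{k^2}\sum_j a_j-\frac{1}{S}\ge 0$ by AM--HM. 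Chebyshev's sum inequality then gives $V_u-V_{\mathrm{geo}}\ge \frac{1}{k}\bigl(\sum_j s_j\bigr)\bigl(\sum_j c_j\bigr)\ge 0$. The same computation covers any profile $w_j\propto 1/a_j$ with $a_j$ and $\sigma_j^2/a_j$ both nondecreasing; that is the correct sense of ``between uniform and $w^\ast$.''

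Third, a smaller bias \emph{bound} does not imply a smaller bias, since the uniform bias can cancel (exactly what the paper assumes). Use $\mathrm{MSE}_{\mathrm{unif}}\ge V_u$ and $\mathrm{MSE}_{\mathrm{geo}}\le L^2\bigl(\sum_j w_j d_j\bigr)^2+V_{\mathrm{geo}}$. The regime in which MSE drops is then $V_u-V_{\mathrm{geo}}>L^2\bigl(\sum_j w_j d_j\bigr)^2$.
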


\begin{proof}[Sketch]
The estimator is a weighted average of Bernoulli indicators. When label noise (or conditional variance) increases with distance, distant neighbors contribute higher-variance terms. A distance-decaying weighting assigns smaller $w_j$ to such neighbors, reducing the weighted variance term $\sum_j w_j^2\,\sigma^2(d_j)$ relative to uniform weights, while the induced bias remains controlled under local smoothness (e.g., Lipschitz continuity) of $P(Y=c\mid x)$.
\end{proof}

\begin{corollary}[Adaptive Metric Selection]
\label{cor:adaptive_metric}
In high-dimensional spaces ($d > 100$), the concentration of Euclidean distance~\cite{beyer1999nearest} causes $\frac{d_{\max} - d_{\min}}{d_{\min}} \to 0$, degrading the discriminative power of inverse-distance weighting. Switching to angular (cosine) distance preserves discriminability as $1 - \cos(\theta)$ remains informative even when $\|x_i - x_j\|_2$ concentrates.
\end{corollary}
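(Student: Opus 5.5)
Proving Corollary~\ref{cor:adaptive_metric} needs two separate arguments. One shows that inverse-distance weights degenerate once Euclidean distances concentrate. The other shows that the cosine distance $1-\cos\theta$ keeps a non-vanishing relative contrast under the same regime. For the first part we take the setting of \cite{beyer1999nearest} as given: if $\mathrm{Var}(\|X-x\|_2)/\mathbb{E}[\|X-x\|_2]^2 \to 0$ as $d\to\infty$, then $(d_{\max}-d_{\min})/d_{\min}\to 0$ in probability over any fixed-size sample, in particular over the $k$ neighbours $N_k(x)$.

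The consequence for the weights of Theorem~\ref{thm:geometric_optimality} follows directly. With $w_j \propto 1/(d_j+\epsilon)$, we have for all $i,j\in N_k(x)$
\begin{equation}
\frac{w_i}{w_j} = \frac{d_j+\epsilon}{d_i+\epsilon} \in \Bigl[\frac{d_{\min}+\epsilon}{d_{\max}+\epsilon},\ \frac{d_{\max}+\epsilon}{d_{\min}+\epsilon}\Bigr],
\end{equation}
and both endpoints tend to $1$ because the relative contrast vanishes. So $w_j \to 1/k$ uniformly, and $\hat P_{\text{geo}}$ converges to the uniform $k$NN vote. The variance reduction $\sum_j w_j^2\sigma^2(d_j)$ promised by Theorem~\ref{thm:geometric_optimality} therefore disappears. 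This is the precise sense of ``degrading the discriminative power''.

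For the angular part we need an assumption. Cosine distance is not immune to concentration under isotropic i.i.d.\ noise: there $\cos\theta \approx 0$ for all pairs, so it concentrates too. The claim can only hold under a structural hypothesis. We would assume the data lie near a low-dimensional subspace or manifold, or that the direction carries the signal while the norms concentrate: $x = \|x\|u$ with $\|x\|\approx c\sqrt d$ and $u$ ranging over a set whose intrinsic dimension stays bounded. Under this model we use the identity
\begin{equation}
\|x_i-x_j\|_2^2 = \|x_i\|^2+\|x_j\|^2-2\|x_i\|\|x_j\|\cos\theta_{ij}.
\end{equation}
It shows that the Euclidean distance is dominated by the nearly constant norm terms whenever the angular differences are small relative to the norm fluctuations. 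Normalising removes these norm terms, leaving $1-\cos\theta_{ij}$, whose spread across $N_k(x)$ is governed by the bounded intrinsic dimension. Its relative contrast therefore stays bounded away from zero.

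The main obstacle is this second step. We must make the hypothesis explicit and show that it is compatible with the Beyer et al.\ conditions for the Euclidean side. A natural candidate is a low intrinsic-dimension signal plus high-dimensional noise whose radial component concentrates but whose angular effect is controlled. The corollary is stated informally, so the honest deliverable is a conditional statement: under this model, cosine-based weights retain a strictly positive limiting contrast, while Euclidean weights collapse to uniform. The threshold $d>100$ would be presented as an empirical heuristic rather than something derived.
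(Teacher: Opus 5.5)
The paper gives no proof of this corollary. It rests entirely on the citation to \cite{beyer1999nearest} and the informal remark that cosine distance ``remains informative by focusing on directional similarity,'' and it later calls the $d>100$ switch a heuristic. Your first half is correct and already goes beyond the paper. With vanishing relative contrast, the bound $w_i/w_j\in\bigl[(d_{\min}+\epsilon)/(d_{\max}+\epsilon),\,(d_{\max}+\epsilon)/(d_{\min}+\epsilon)\bigr]$ forces $w_j\to 1/k$, which is a precise version of ``degrading discriminative power.'' You are also right that the cosine claim is false without extra structure. The genuine gap is the structure you propose for the second half. A model in which norms concentrate near $c\sqrt d$ and directions range over a bounded-intrinsic-dimension set cannot make Euclidean distances concentrate while cosine distances do not. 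So the conditional statement you plan to deliver is false under that model.

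Rearranging your own identity gives
\[
1-\cos\theta_{ij}=\frac{\|x_i-x_j\|_2^2-\bigl(\|x_i\|-\|x_j\|\bigr)^2}{2\,\|x_i\|\,\|x_j\|}.
\]
\begin{enumerate}
\item When norms concentrate, the denominator is essentially the same for every $j\in N_k(x_i)$, and the norm-gap term is negligible against $\|x_i-x_j\|_2^2$. So $1-\cos\theta_{ij}$ is, to leading order, a global rescaling of the squared Euclidean distance. Relative contrast is scale-invariant, so the cosine contrast vanishes exactly when the Euclidean one does.
\item Conversely, if bounded intrinsic dimension keeps the cosine contrast positive, then $\|x_i-x_j\|_2\approx c\sqrt{2d}\,\sqrt{1-\cos\theta_{ij}}$ keeps the Euclidean contrast positive too. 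The premise of the corollary is then not met.
\item ``Normalising removes the norm terms'' is therefore illusory: with nearly equal norms, normalisation divides every distance by nearly the same number. Your sentence also conflates two different terms. The sum $\|x_i\|^2+\|x_j\|^2$ dominates only if every $\cos\theta_{ij}\approx 0$, and then $1-\cos\theta_{ij}\approx 1$ concentrates as well. Norm fluctuations, by contrast, enter only through $(\|x_i\|-\|x_j\|)^2$.
\item The same computation rules out your second candidate, a low-dimensional signal plus isotropic noise with per-coordinate variance $\sigma^2$ and $\sigma^2 d$ large relative to the signal. The noise makes $\|x_i-x_j\|_2^2\approx 2\sigma^2 d$ and $\|x_i\|^2\approx\sigma^2 d$ simultaneously, so $1-\cos\theta_{ij}\approx 1$ for all pairs.
\end{enumerate}
The missing idea is that cosine contrast can survive Euclidean concentration only if, for some neighbours, the norm gaps $\bigl|\|x_i\|-\|x_j\|\bigr|$ remain a non-vanishing fraction of $\|x_i-x_j\|_2$. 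Otherwise the display above forces $1-\cos\theta_{ij}$ to concentrate. In other words, the surviving structure must be radial with respect to the origin, which is the opposite of a concentrating-norm model. Such structure depends on centring, since cosine distance is not translation-invariant. A rigorous conditional version must posit this radial structure explicitly. Otherwise the second sentence of the corollary can only stand as a heuristic, which is effectively how the paper treats it.
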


These results motivate GMR's core design: (1) asymmetric cleaning can shift empirical posteriors in overlap regions (Lemma~\ref{lem:cleaning_effect}), (2) geometric weighting can reduce estimation variance under distance-dependent noise (Theorem~\ref{thm:geometric_optimality}), and (3) metric choice matters in high dimensions (Corollary~\ref{cor:adaptive_metric}).

\subsection{Proposed Method: GMR Framework}

Figure~\ref{fig:framework} provides a visual overview of the GMR pipeline across three stages: (a) the original imbalanced data with majority intrusion into the minority manifold, (b) the geometric confidence estimation process where neighbor influence is weighted by distance (thicker arrows indicate closer, more influential neighbors), and (c) the cleaned result showing removed ambiguous majority samples and an improved decision boundary. This three-panel illustration demonstrates how GMR transforms the data space to facilitate robust classification.

The GMR framework is designed as a data rectification pipeline that addresses two critical aspects of imbalanced learning: (1) \textbf{Geometric Confidence Estimation} for robust reliability assessment under overlap/noise, and (2) \textbf{Adaptive Asymmetric Cleaning} for cost-sensitive boundary refinement with conservative minority protection.

\subsubsection{Geometric Confidence Estimation}
Standard kNN-based methods estimate the local class probability uniformly, which is equivalent to a Parzen-Rosenblatt window estimator with a uniform (box) kernel: $K_{\text{uniform}}(x_i, x_j) = \mathbb{I}(x_j \in N_k(x_i))$. This approach treats all $k$ neighbors equally regardless of their distances, making it sensitive to the choice of $k$ and the presence of outliers. As established in Theorem~\ref{thm:geometric_optimality}, inverse-distance weighting can reduce mean squared error under distance-dependent noise assumptions---a common scenario in overlap regions where distant samples are more likely to be mislabeled or belong to the opposite class. To obtain a more robust estimate that respects the underlying manifold geometry and local density variations, we propose a \textit{distance-weighted geometric confidence estimator}.

The local probability of class $c$ at a query point $x_i$ is estimated as:
\begin{equation}
    \hat{P}(Y=c|x_i) = \frac{\sum_{x_j \in N_k(x_i)} K(x_i, x_j) \mathbb{I}(y_j=c)}{\sum_{x_j \in N_k(x_i)} K(x_i, x_j)}
\end{equation}
where $N_k(x_i)$ is the set of $k$ nearest neighbors of $x_i$, and $K(\cdot, \cdot)$ is a distance-based kernel function. We employ an inverse-distance kernel:
\begin{equation}
    K(x_i, x_j) = \frac{1}{d(x_i, x_j) + \epsilon}
\end{equation}
where $\epsilon = 10^{-8}$ prevents division by zero, and $d(\cdot, \cdot)$ is the distance metric. This kernel assigns higher weights to closer neighbors, implementing a smooth, manifold-aware voting mechanism. The weights are then normalized: $w_{ij} = K(x_i, x_j) / \sum_{x_\ell \in N_k(x_i)} K(x_i, x_\ell)$, ensuring $\sum_j w_{ij} = 1$.

\textbf{Adaptive Metric Selection:} To address the curse of dimensionality in high-dimensional datasets (e.g., text embeddings, gene expression profiles, or deep features), GMR adaptively selects the distance metric $d(\cdot, \cdot)$ used in the kernel based on intrinsic dimensionality. The choice is motivated by empirical observations and theoretical considerations: in high dimensions, Euclidean distance becomes less discriminative as all pairwise distances tend to concentrate~\cite{beyer1999nearest}, whereas angular (cosine) distance remains informative by focusing on directional similarity.
\begin{equation}
    d(x_i, x_j) = \begin{cases} 
    \|x_i - x_j\|_2 & \text{if } \text{dim}(x) \le 100 \\
    1 - \frac{x_i \cdot x_j}{\|x_i\| \|x_j\|} & \text{if } \text{dim}(x) > 100
    \end{cases}
\end{equation}
The threshold of 100 dimensions is used as a lightweight heuristic consistent with prior observations about distance concentration in high dimensions~\cite{beyer1999nearest}. For datasets with $50 < d \le 100$, Euclidean and cosine distances are often comparable in practice, so this switch should be interpreted as a pragmatic default rather than a hard theoretical boundary.

\subsubsection{Adaptive Asymmetric Cleaning}
Based on the geometric confidence $\hat{P}(Y=c|x_i)$, we apply an asymmetric cleaning strategy with class-dependent thresholds and adaptive safeguards. The theoretical justification is provided by Lemma~\ref{lem:cleaning_effect}: by ensuring $|R \cap \mathcal{D}_{maj}| \gg |R \cap \mathcal{D}_{min}|$, we increase the effective minority posterior $\hat{P}_{\mathcal{D}'}(Y=1|x)$ in overlap regions, which aligns the decision boundary with the asymmetric risk objective (Definition~\ref{def:asymmetric_risk}).

\begin{itemize}
    \item \textbf{Majority Cleaning (Strict)}: A majority sample $x_i \in \mathcal{D}_{maj}$ is removed if either: (1) it is misclassified by its neighbors ($\hat{y}(x_i) = 1$, inconsistency), or (2) its same-class confidence is low ($\hat{P}(Y=0|x_i) < \alpha$ where $\alpha = 0.3$, ambiguity). This dual criterion aggressively targets intrusive samples that penetrate the minority manifold or occupy ambiguous boundary regions. The low threshold $\alpha = 0.3$ reflects our philosophy of strict majority cleaning to maximize $|R \cap \mathcal{D}_{maj}|$ while preserving topology.
    
    \item \textbf{Minority Protection (Conservative)}: A minority sample $x_i \in \mathcal{D}_{min}$ is considered for removal only if both: (1) it is misclassified by neighbors ($\hat{y}(x_i) = 0$, predicted as majority), and (2) it has high majority confidence ($\hat{P}(Y=0|x_i) > \beta$ where $\beta = 0.7$, deeply embedded in majority space). This dual criterion ensures only unambiguous deep noise is targeted. Furthermore, we impose a \textit{Safe-Guarding Cap}: at most $\gamma \cdot |\mathcal{D}_{min}|$ minority samples are removed (default $\gamma = 0.1$, i.e., max 10\% removal). When candidates exceed this limit, we rank them by $\hat{P}(Y=0|x_i)$ (descending) and remove only the top $\lfloor \gamma \cdot |\mathcal{D}_{min}| \rfloor$ most suspicious samples. Additionally, if $|\mathcal{D}_{min}| < 10$ (critical scarcity), the entire cleaning is skipped, preventing manifold collapse in extreme data scarcity scenarios.
\end{itemize}

This asymmetric design reflects the fundamental principle: in imbalanced domains, preserving minority information is paramount, while majority redundancy can be safely reduced. The cost-asymmetry is explicitly encoded through $\alpha \ll \beta$ and $\gamma \ll 1$.

\subsubsection{Unified Parameter Selection Framework}\label{sec:param_framework}
GMR uses a small set of hyperparameters motivated by three competing objectives: \textbf{(1) Boundary Refinement} (removing overlap), \textbf{(2) Manifold Preservation} (retaining class topology), and \textbf{(3) Statistical Reliability} (ensuring sufficient samples for robust estimation). We summarize this trade-off via the following constrained objective:
\begin{equation}
\begin{split}
\max_{\alpha, \beta, k, \gamma}\ &\mathbb{E}[\text{AUPRC}(h_{\mathcal{D}'})] \\
\text{s.t.}\ &\frac{|R \cap \mathcal{D}_{min}|}{|\mathcal{D}_{min}|} \le \gamma, \\
&\beta \ge \alpha + \delta_{min}
\end{split}
\end{equation}
where $h_{\mathcal{D}'}$ is the classifier trained on cleaned data $\mathcal{D}'$, and $\delta_{min}$ enforces asymmetry. The parameters are selected as follows:

\textbf{(A) Neighborhood size $k$:} We use $k=15$ as the default to balance local sensitivity (detecting boundary samples) and stability of the neighborhood estimate. In practice, a modest range around the default (e.g., $k \in [10, 20]$) is typically sufficient.

\textbf{(B) Asymmetric thresholds $(\alpha, \beta)$:} We set $\alpha = 0.3$ (strict majority cleaning) and $\beta = 0.7$ (conservative minority protection) to reflect the asymmetric cost structure. The gap $\beta - \alpha = 0.4$ ensures that minority samples require substantially stronger evidence for removal than majority samples, preventing minority manifold erosion. The choice of $\alpha = 0.3$ means a majority sample is removed if its same-class confidence is below 30\%, aggressively targeting boundary intrusion. Conversely, $\beta = 0.7$ means a minority sample is considered for removal only if it has $>70\%$ majority-class confidence (deeply embedded in majority space), with an additional cap of $\gamma = 0.1$ (max 10\% removal). These values balance boundary refinement with manifold preservation.

\textbf{(C) Minority protection cap $\gamma$:} This hard constraint prevents excessive minority loss. We use $\gamma = 0.1$ (at most 10\% minority removal) as a conservative default.

\textbf{(D) Dimensionality-adaptive metric switching $d_{thresh} = 100$:} In high dimensions ($d > 100$), Euclidean distance can suffer from concentration (pairwise distances become less informative)~\cite{beyer1999nearest}. We therefore use cosine distance for $d>100$ and Euclidean otherwise as a lightweight heuristic.

This framework is intended as a practical guideline rather than a solved optimization problem: it makes the asymmetry explicit and keeps the parameterization small and stable.

\paragraph{Implementation details} Our reference implementation is summarized in Appendix~\ref{app:algorithm} (Algorithm~\ref{alg:gmr_detailed}). For nearest neighbor computation, we use \texttt{sklearn.neighbors.NearestNeighbors} with \texttt{algorithm='auto'} and \texttt{n\_jobs=-1}. When computing confidence scores, we request $k+1$ neighbors and exclude the query itself (first neighbor) to avoid self-voting bias. The inverse-distance weights are computed as $w_{ij} = 1/(d_{ij} + 10^{-8})$ and normalized row-wise to sum to 1. The workflow consists of (1) geometric confidence estimation with dimensionality-adaptive metric selection ($d \le 100$ uses Euclidean, $d > 100$ uses Cosine), and (2) asymmetric cleaning with strict majority removal ($\alpha = 0.3$: inconsistent OR ambiguous) and conservative minority protection ($\beta = 0.7$: deep noise only, capped at $\gamma = 0.1$).

\subsection{Complexity}
Algorithmic pseudocode is deferred to Appendix~\ref{app:algorithm} to save space. The principal computational cost is nearest-neighbor search for confidence estimation over all training points: a brute-force implementation is $O(N^2 d)$, while tree/approximate-neighbor backends can reduce practical runtime depending on data geometry.

\begin{proposition}[Asymmetric Boundary Alignment]
\label{prop:boundary_alignment}
Let $\mathcal{D}'$ be the dataset after GMR cleaning with removal set $R$. Assume: (A1) the true posterior $P(Y=1|x)$ is continuous; (A2) GMR achieves $|R \cap \mathcal{D}_{maj}| \ge 3 |R \cap \mathcal{D}_{min}|$ (asymmetry); (A3) removed majority samples $R \cap \mathcal{D}_{maj}$ are predominantly from the overlap region $\Omega = \{x : 0.3 < P(Y=1|x) < 0.7\}$. Then, for $x \in \Omega$, the empirical posterior is shifted upward relative to $\mathcal{D}$; stronger overlap-focused cleaning increases this shift. Consequently, a classifier trained on $\mathcal{D}'$ tends to exhibit a decision boundary shifted toward the majority class, reducing false negatives (misclassified minority samples) at the cost of slightly increased false positives---aligning with asymmetric cost objectives.
\end{proposition}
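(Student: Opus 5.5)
The argument reduces to a local version of Lemma~\ref{lem:cleaning_effect}, followed by a monotonicity argument for the shift and a threshold argument for the decision boundary. Since the statement is qualitative, I read ``for $x \in \Omega$'' as holding for the local neighborhood statistic at such $x$, as in Lemma~\ref{lem:cleaning_effect}.

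\textbf{Step 1: local removal rates.} Fix $x\in\Omega$ and a small neighborhood $B(x)$. Let $n_c(x)$ be the class-$c$ counts in $B(x)$, and define local removal rates $r_c(x)=|R\cap\mathcal{D}_c\cap B(x)|/n_c(x)$. The global rates of Lemma~\ref{lem:cleaning_effect} are the wrong quantity here: in imbalanced data $|\mathcal{D}_{maj}|\gg|\mathcal{D}_{min}|$, so (A2) alone does not force $r_0>r_1$ globally. This is where (A3) enters. Because removed majority points concentrate in $\Omega$, and minority removals are capped by $\gamma$ and restricted to points with $\hat P(Y=0|x_i)>\beta=0.7$ (deep majority space, largely outside $\Omega$), I would argue that $r_0(x)>r_1(x)$ for $x$ in $\Omega$. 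Continuity (A1) makes $\Omega$ an open set, so small balls around $x$ stay inside it.

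\textbf{Step 2: the upward shift and its monotonicity.} Substituting into the identity of Lemma~\ref{lem:cleaning_effect},
\[
\hat P_{\mathcal{D}'}(Y=1|x)=\frac{p_1(1-r_1)}{p_1(1-r_1)+p_0(1-r_0)} .
\]
This is strictly increasing in $r_0$ and strictly decreasing in $r_1$ whenever $p_0,p_1>0$, and both are positive on $\Omega$. Hence $r_0(x)>r_1(x)$ gives $\hat P_{\mathcal{D}'}(Y=1|x)>\hat P_{\mathcal{D}}(Y=1|x)$. Making the cleaning more focused on overlap raises $r_0(x)$ on $\Omega$ with $r_1$ fixed, so the shift grows. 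This follows from the derivative
\[
\partial_{r_0}\hat P_{\mathcal{D}'}=\frac{p_1p_0(1-r_1)}{\bigl(p_1(1-r_1)+p_0(1-r_0)\bigr)^2}>0 .
\]

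\textbf{Step 3: boundary consequence.} Assume the classifier trained on $\mathcal{D}'$ approximates the plug-in rule $\mathbb{I}\bigl(\hat P_{\mathcal{D}'}(Y=1|x)>1/2\bigr)$, which holds for consistent learners. Then the predicted-minority region
\[
\bigl\{x:\hat P_{\mathcal{D}'}(Y=1|x)>1/2\bigr\}
\]
contains the corresponding region for $\mathcal{D}$ within $\Omega$, so the boundary moves into majority territory. Points that switch from predicted 0 to predicted 1 contribute fewer false negatives and more false positives. The switch happens only on the subset of $\Omega$ where the posterior crosses $1/2$, which bounds the increase in false positives. Comparing with Definition~\ref{def:asymmetric_risk}, the effective threshold drops below $1/2$, toward the Bayes threshold $C_{10}/(C_{01}+C_{10})$. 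Since $C_{01}\gg C_{10}$, this lowers $R(f)$ so long as the shift does not overshoot.

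\textbf{Main obstacle.} The hard part is Step 1: turning the global asymmetry (A2) together with the qualitative (A3) into the pointwise inequality $r_0(x)>r_1(x)$. I would first make ``predominantly'' quantitative. One option is to assume a fraction at least $1-\eta$ of removed majority points lies in $\Omega$ and that the densities are comparable there. Another is to argue directly from GMR's rules: the $\beta$-threshold keeps minority removals out of $\Omega$ up to estimation error, so $r_1(x)\approx 0$ on most of $\Omega$. Then any positive local majority removal gives the shift. The ``tends to'' and ``slightly'' parts of the conclusion are honestly heuristic, and I would state them as holding under the plug-in approximation rather than prove them.
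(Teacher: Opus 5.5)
Your argument holds at the level of rigor the statement allows. It shares the paper's skeleton: substitute into the identity of Lemma~\ref{lem:cleaning_effect}, get an upward shift on $\Omega$, then read off the $1/2$-level boundary. The middle step, however, is handled quite differently.

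\textbf{The paper's route.} The paper works with the \emph{global} rates $r_c$ and asserts that (A2) gives $r_0\ge 3r_1$. It approximates $n_0(x)/n_1(x)\approx\mathrm{IR}$ on $\Omega$ and shows the shift only through numerical instances (e.g.\ $\mathrm{IR}=10$, $r_0=0.2$, $r_1=0.05$, giving $\delta\approx 0.015$). It then brings in (A3) by positing a local rate $r_0^{\Omega}\ge 1.5\,r_0$, derives $\delta\ge r_0^{\Omega}\mathrm{IR}/(1+\mathrm{IR})^2$, and concludes via $\mathcal{B}'=\{x:\hat P_{\mathcal{D}'}(Y=1|x)=0.5\}$.

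\textbf{Where your version is stronger.} Your remark that a count asymmetry is not a rate asymmetry is right, and it exposes a slip in that proof: (A2) only yields $r_0\ge 3r_1/\mathrm{IR}$. Likewise, $n_0/n_1\approx\mathrm{IR}$ is inconsistent with $0.3<P(Y=1|x)<0.7$; your general $p_0,p_1$ avoid this. Your local rates turn the Lemma's formula into an exact count identity. Your derivative gives both the sign of the shift and its monotonicity in $r_0$ for all parameter values, not just examples. Your effective-threshold computation ties the boundary move to the Bayes threshold $C_{10}/(C_{01}+C_{10})$ of Definition~\ref{def:asymmetric_risk}, which the paper never makes explicit.

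\textbf{The open step.} You leave open the derivation of $r_0(x)>r_1(x)$ on $\Omega$ from (A3) and the $\beta$-rule. The paper does not close this either: its $r_0^{\Omega}\ge 1.5\,r_0$ is simply an asserted quantification of ``predominantly'', essentially your first option. So the obstacle you flag reflects looseness in the proposition's hypotheses, not a defect of your route. Stating a quantitative form of (A3) as an explicit assumption would make your Steps 1--2 rigorous. Step 3 would then rest only on the plug-in approximation you already state.
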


\begin{proof}[Sketch]
Lemma~\ref{lem:cleaning_effect} shows that when the class-conditional removal rate is larger for the majority ($r_0>r_1$), the cleaned empirical posterior in overlap neighborhoods is pushed upward relative to the uncleaned estimate, under the same local counting model. Removing predominantly ambiguous majority samples (assumption A3) therefore tends to increase $\hat{P}(Y=1\mid x)$ around the boundary, which can move the learned decision boundary toward the majority side and reduce false negatives, consistent with the asymmetric-risk motivation.
\end{proof}
\section{Experiments}\label{sec:experiments}

In this section, we present the experimental results of our proposed GMR framework compared to various baseline methods. We evaluated the performance on 27 benchmark datasets from the imbalanced learning domain.

\subsection{Experimental Setup}
We follow a unified, lightweight protocol to facilitate fair comparisons and reduce tuning overhead. Unless otherwise stated, all samplers and classifiers use standard library defaults. Specifically, we use repeated holdout with stratified 80\%/20\% train--test splits over 5 fixed random seeds ($\{42,0,1,2,3\}$); all methods share identical splits and we report mean $\pm$ std. We compare GMR against 18 standard resampling baselines plus \textit{None} (no sampling), reporting aggregate ranks and complete per-dataset baseline tables in the appendix, and we evaluate classifier-agnostic behavior using 7 diverse learners (LR, SVM-RBF, DT, RF, GBM, XGBoost, KNN). Our primary metric is AUPRC~\cite{saito2015precision}. Full baseline lists and experimental settings are deferred to Appendix~\ref{app:settings}.

\subsection{Classifier-Agnostic Analysis}
A key advantage of GMR is its classifier-agnostic nature---as a preprocessing method, it should benefit diverse classification algorithms without requiring classifier-specific tuning. To validate this property, we report classifier-wise average ranks across 7 heterogeneous classifiers (averaged over 27 datasets). The rank heatmap and the full numeric table are provided in Appendix~\ref{app:classifier_rank_fig} (Figure~\ref{fig:method_rank_heatmap} and Table~\ref{tab:method_rank_per_classifier}).

GMR achieves the best overall average rank of \textbf{4.22} across all classifiers. GMR attains the minimum average rank on 6 out of 7 classifiers: DecisionTree (1.00), KNN (2.19), LogisticRegression (2.89), SVM (5.30), RandomForest (5.93), and GradientBoosting (5.59). Only on XGBoost does GMR rank second (6.67) after the baseline "None" method (6.44), suggesting that XGBoost's built-in handling of class imbalance reduces the marginal benefit of preprocessing. Overall, these results support GMR's \textit{classifier-agnostic} behavior across linear models (LR, SVM), tree-based methods (DT, RF, GBM), boosting models (XGBoost), and instance-based learners (KNN).

\subsection{Deep Learning Baseline Comparison (TabDDPM)}\label{sec:deep_baseline}
To position GMR relative to modern deep tabular modeling, we compare against TabDDPM~\cite{kotelnikov2023tabddpm}, a diffusion-based generative model with strong tabular performance. We evaluate on five large-scale, diverse datasets: \texttt{covtype} (581K), \texttt{creditcard} (284K), \texttt{ecg\_arrhythmia} (109K), \texttt{nsl\_kdd} (148K), and \texttt{yelp\_review} (560K).

Figure~\ref{fig:tabddpm_comparison} reports AUPRC for \textbf{TabDDPM} trained on raw data versus \textbf{TabDDPM+GMR}, where we apply GMR cleaning to the training split before fitting TabDDPM. Overall, GMR acts as a lightweight, model-agnostic rectification step that improves performance on several overlap/noise-heavy datasets, while preserving near-ceiling performance on easier benchmarks (\texttt{nsl\_kdd} and \texttt{ecg\_arrhythmia}). On \texttt{creditcard}, we observe a small decrease, suggesting that when TabDDPM already captures dominant fraud patterns, aggressive boundary cleaning may remove some rare-but-informative majority context.

\begin{figure}[t]
\centering
\includegraphics[width=\columnwidth]{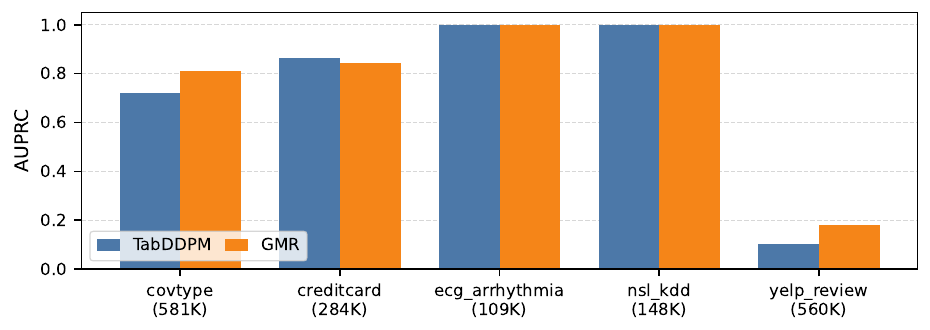}
\vspace{-2mm}
\caption{Deep tabular baseline comparison on large-scale datasets (AUPRC; mean over 5 seeds). Bars show TabDDPM (raw) vs. TabDDPM+GMR (pre-cleaned).}
\label{fig:tabddpm_comparison}
\vspace{-2mm}
\end{figure}

\subsection{Extension to Image Classification: CIFAR-LT}\label{sec:cifar}
To assess whether GMR's cleaning mechanism transfers beyond tabular data, we run a small supplementary experiment on CIFAR-100-LT (imbalance ratio = 100) by applying the 
same GMR pipeline (geometric confidence estimation + asymmetric cleaning) to fixed CNN feature embeddings. Specifically, we extract ResNet-32 features and apply GMR in the feature space before training a downstream classifier (\emph{GMR-Feature}). This conforms to our algorithmic design: GMR remains a model-agnostic preprocessing step, and only the representation $x$ changes (from tabular features to CNN embeddings). Table~\ref{tab:cifar_lt_results} reports Balanced Accuracy. GMR-Feature achieves 36.34$\pm$0.73, below CE (37.76$\pm$1.43) but substantially better than CB~\cite{cui2019classbalanced} (29.64$\pm$1.15); in this setting, LDAM~\cite{cao2019ldam} is much lower (1.00$\pm$0.00).

\begin{table}[t]
\centering
\caption{Long-tailed image classification results on CIFAR-100-LT (Balanced Accuracy \%). GMR-Feature applies the same GMR cleaning pipeline to ResNet-32 extracted features before downstream classification. Methods: CE (Cross-Entropy baseline), CB (Class-Balanced loss), LDAM (Label-Distribution-Aware Margin). }
\label{tab:cifar_lt_results}
\small
\setlength{\tabcolsep}{3pt}
\resizebox{\columnwidth}{!}{%
\begin{tabular}{@{}lcccc@{}}%
\toprule
Dataset & GMR-Feature & CE & CB & LDAM \\
\midrule
CIFAR-100-LT & 36.34$\pm$0.73 & 37.76$\pm$1.43 & 29.64$\pm$1.15 & 1.00$\pm$0.00 \\
\bottomrule
\end{tabular}%
}
\end{table}


\section{Conclusion}
In this paper, we introduced GMR, a Geometric Manifold Rectification framework for imbalanced classification that treats resampling as geometry-aware data cleaning. GMR combines (i) inverse-distance weighted $k$NN confidence estimation with a simple dimensionality-adaptive metric switch, and (ii) class-aware asymmetric cleaning that is strict on majority boundary intrusions while conservatively safeguarding the minority manifold via a removal cap. Our analysis links asymmetric removal to posterior shifting in overlap regions and motivates geometric weighting under distance-dependent noise, providing principled support for these design choices. Empirically, across 27 imbalanced benchmarks and 7 heterogeneous classifiers, GMR achieves strong aggregate rank performance against standard resampling and cleaning baselines; we also observe that GMR can serve as a lightweight preprocessing step that benefits a deep tabular baseline (TabDDPM) on several large-scale datasets. A small feature-space experiment on CIFAR-100-LT further suggests that the same cleaning mechanism can be applied beyond tabular inputs when a fixed representation is available.

\textbf{Limitations and Future Directions:} GMR may be less effective under extreme imbalance (e.g., IR $>$ 100) where even conservative cleaning can discard rare-but-informative minority samples, and it relies on sufficient sample size for stable neighborhood estimates (a practical concern for very small datasets). Our current scope focuses on binary classification and uses fixed, intuitive defaults rather than data-driven tuning. We also emphasize rank-based comparisons and do not include dedicated significance tests (e.g., Friedman/Nemenyi), which should be added in future evaluations. Future work includes extending GMR to multi-class settings (e.g., one-vs-rest with coordinated cleaning), improving scalability via approximate nearest-neighbor backends, and developing adaptive hyperparameter selection rules that adjust $\alpha$, $\beta$, $k$, and $\gamma$ based on measured overlap/noise and dataset size.

\bibliographystyle{unsrt}
\bibliography{refs}

\appendix
\onecolumn
\section{Detailed Algorithm}\label{app:algorithm}

Algorithm~\ref{alg:gmr_detailed} provides a complete specification of GMR with all computational details.

\begin{algorithm}[H]
\caption{GMR: Detailed Implementation}\label{alg:gmr_detailed}
\centering
\resizebox{0.86\textwidth}{!}{%
\begin{minipage}{\textwidth}
\begin{algorithmic}[1]
\REQUIRE{} Training dataset $\mathcal{D} = \{(x_i, y_i)\}_{i=1}^N$, number of neighbors $k$, majority confidence threshold $\alpha \in [0,1]$, majority-confidence threshold for minority filtering $\beta \in [0,1]$, max minority removal fraction $\gamma \in [0,1]$
\ENSURE{} Cleaned dataset $\mathcal{D}'$

\STATE \textbf{// Phase 1: Adaptive Metric Selection}
\IF{$\text{dim}(\mathcal{X}) > 100$}
    \STATE $d \leftarrow \text{cosine}$
\ELSE
    \STATE $d \leftarrow \text{euclidean}$
\ENDIF

\STATE \textbf{// Phase 2: Geometric Confidence Estimation}
\STATE Build KNN backend: $\mathcal{T} \leftarrow \text{NearestNeighbors}(\text{metric}=d, \text{algorithm=auto}, n\_jobs=-1).\text{fit}(X)$
\FOR{$i = 1$ to $N$}
    \STATE Query neighbors: $\tilde{N}(x_i) \leftarrow \mathcal{T}.\text{query}(x_i, k+1)$
    \STATE Exclude the self-neighbor and keep $k$ neighbors: $N_k(x_i) \leftarrow \tilde{N}(x_i)\setminus \{x_i\}$
    \STATE Compute distances: $\{d_j\}_{j \in N_k(x_i)} \leftarrow \{d(x_i, x_j)\}_{j \in N_k(x_i)}$
    \STATE Compute inverse-distance weights: $w_j \leftarrow \frac{1/(d_j + 10^{-8})}{\sum_{j' \in N_k(x_i)} 1/(d_{j'} + 10^{-8})}$ for $j \in N_k(x_i)$
    \STATE Compute weighted class votes: $v_c \leftarrow \sum_{j \in N_k(x_i)} w_j \cdot \mathbb{I}(y_j = c)$ for $c \in \{0,1\}$
    \STATE Predicted label: $\hat{y}(x_i) \leftarrow \arg\max_{c \in \{0,1\}} v_c$
    \STATE Geometric confidence: $\mathrm{Conf}(x_i) \leftarrow v_{y_i}$ \COMMENT{Agreement with true label}
    \STATE Majority confidence: $\mathrm{MajConf}(x_i) \leftarrow v_0$
\ENDFOR

\STATE \textbf{// Phase 3: Asymmetric Removal Candidate Selection}
\STATE $\mathcal{D}_{maj} \leftarrow \{(x_i, y_i) \in \mathcal{D} : y_i = 0\}$ \COMMENT{Majority class}
\STATE $\mathcal{D}_{min} \leftarrow \{(x_i, y_i) \in \mathcal{D} : y_i = 1\}$ \COMMENT{Minority class}
\STATE Initialize removal sets: $R_{maj} \leftarrow \emptyset$, $C_{min} \leftarrow \emptyset$

\STATE \textbf{// Aggressive Majority Cleaning}
\FOR{$(x_i, y_i) \in \mathcal{D}_{maj}$}
    \IF{$\hat{y}(x_i)=1$ \textbf{or} $\mathrm{Conf}(x_i) < \alpha$}  
        \STATE Add $(x_i,y_i)$ to $R_{maj}$
    \ENDIF
\ENDFOR

\STATE \textbf{// Conservative Minority Cleaning}
\FOR{$(x_i, y_i) \in \mathcal{D}_{min}$}
    \IF{$\hat{y}(x_i)=0$ \textbf{and} $\mathrm{MajConf}(x_i) > \beta$}  
        \STATE Add $(x_i,y_i)$ to $C_{min}$
    \ENDIF
\ENDFOR

\STATE \textbf{// Safe-Guarding: Limit Minority Removals}
\STATE Sort $C_{min}$ by descending majority confidence: $C_{min}^{\text{sorted}} \leftarrow \text{sort}(C_{min}, \text{key}=\mathrm{MajConf}, \text{descending}=True)$
\STATE Compute removal budget: $n_{remove} \leftarrow \min(|C_{min}^{\text{sorted}}|, \lfloor \gamma \cdot |\mathcal{D}_{min}| \rfloor)$ \COMMENT{Default $\gamma = 0.1$}
\STATE Select worst minority samples: $R_{min} \leftarrow C_{min}^{\text{sorted}}[: n_{remove}]$

\STATE \textbf{// Phase 4: Dataset Cleaning}
\STATE $\mathcal{D}' \leftarrow \mathcal{D} \setminus (R_{maj} \cup R_{min})$
\STATE \textbf{return} $\mathcal{D}'$, removal statistics $\{|R_{maj}|, |R_{min}|, \text{IR}_{\text{before}}, \text{IR}_{\text{after}}\}$
\end{algorithmic}
\end{minipage}%
}
\end{algorithm}

\textbf{Complexity Analysis:}
\begin{itemize}
    \item \textbf{Neighbor Search:} For all-point $k$NN with brute-force backend, the cost is $O(N^2 d)$. Tree/ANN backends can reduce practical runtime depending on data geometry.
    \item \textbf{Confidence Aggregation:} $O(Nk)$ for weighted voting once neighbors are retrieved.
    \item \textbf{Candidate Sorting:} $O(M \log M)$ where $M \le |\mathcal{D}_{min}|$ is the number of minority candidates.
    \item \textbf{Overall:} $O(N^2 d + Nk + M\log M)$ in the brute-force setting.
\end{itemize}

\section{Complete Theoretical Proofs}\label{app:proofs}

\subsection{Proof of Theorem~\ref{thm:geometric_optimality} (Geometric Weighting Optimality)}\label{app:proof_geometric}

\begin{theorem}[Restatement]
Under the assumption that label noise probability increases with distance from true class centroids, the inverse-distance weighted estimator $\hat{P}_{\text{geo}}(Y=c|x) = \sum_{j \in N_k(x)} w_j \mathbb{I}(y_j=c)$ with $w_j \propto 1/d(x, x_j)$ can achieve lower mean squared error (MSE) than uniform weighting when estimating the true posterior $P(Y=c|x)$.
\end{theorem}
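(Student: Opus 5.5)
Conditional on the neighbor locations $x_1,\dots,x_k$ with distances $d_j=d(x,x_j)$, I will split the mean squared error into bias and variance and then compare the inverse-distance weights with uniform weights $u_j=1/k$. I model the labels as conditionally independent Bernoulli variables with $P(y_j=c\mid x_j)=q_j$. The noise assumption is encoded as $q_j = P(Y=c\mid x_j)+\eta_j$, with flip probability $\eta(d_j)$ nondecreasing in $d_j$. Label noise near class centroids is read as noise that grows with the neighbor's distance from the query inside the overlap neighborhood.

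For any normalized weight vector $w$ we have
\[
\mathrm{MSE}(w)=\Bigl(\sum_j w_j q_j - P(Y=c\mid x)\Bigr)^2+\sum_j w_j^2\,\sigma_j^2,\qquad \sigma_j^2=q_j(1-q_j).
\]
The proof then proceeds in three steps.

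\textbf{Step 1 (bias).} I bound the bias using local smoothness: if $P(Y=c\mid\cdot)$ is $L$-Lipschitz, then
\[
\Bigl|\sum_j w_j q_j-P(Y=c\mid x)\Bigr|\le L\sum_j w_j d_j+\sum_j w_j\eta(d_j).
\]
Because $w_j\propto 1/(d_j+\epsilon)$ is decreasing in $d_j$, Chebyshev's sum inequality gives $\sum_j w_j a(d_j)\le \frac1k\sum_j a(d_j)$ for any nondecreasing $a$. So both the smoothness term and the noise term of the bias bound are no larger than under uniform weights. This step is clean.

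\textbf{Step 2 (variance).} Here $\sum_j w_j^2\sigma_j^2$ must be compared with $\frac1{k^2}\sum_j\sigma_j^2$. This is the main obstacle, because concentrating weight always increases $\sum_j w_j^2$; that sum is at least $1/k$, with equality exactly at uniform weights. A pure variance reduction therefore holds only when $\sigma_j^2$ increases with distance fast enough to offset this. My first attempt is a sufficient condition via a rearrangement/Chebyshev argument. I write $\sum_j w_j^2\sigma_j^2=\sum_j w_j\,(w_j\sigma_j^2)$ and require $w_j\sigma_j^2$ to be nonincreasing in $d_j$, i.e.\ $\sigma^2(d)/(d+\epsilon)$ nonincreasing. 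That alone does not close the gap, so I plan to state an explicit regime condition instead. One candidate is $\sigma^2(d_j)\ge \sigma_0^2 (d_j+\epsilon)/\bar d$ in the heteroskedastic regime with exact Bernoulli variances. The other candidate is to compare against the regime where the variance-optimal weights $w_j^\star\propto 1/\sigma_j^2$ are well approximated by inverse distance, i.e.\ $\sigma_j^2\propto d_j+\epsilon$. In that second case the inverse-distance estimator is exactly the minimum-variance linear unbiased combination, and Cauchy--Schwarz gives $\sum_j w_j^{\star2}\sigma_j^2=1/\sum_j \sigma_j^{-2}\le \frac1{k^2}\sum_j\sigma_j^2$. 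I will adopt this second condition as the precise hypothesis and then argue by continuity for nearby variance profiles.

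\textbf{Step 3 (combine).} Combining the two steps, both the bias bound and the variance are no larger than under uniform weighting, so the MSE is lower. The inequality is strict whenever the distances are not all equal. Averaging over the random neighbor configuration preserves the inequality. This matches the ``can achieve'' phrasing and the variance-dominated regime in Theorem~\ref{thm:geometric_optimality}. I will also remark that without such a distance--variance coupling the claim can fail, which is why the statement is conditional.
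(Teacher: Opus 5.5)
\textbf{There is a genuine gap in Step~3.} Your variance step is correct, and it is sharper than the paper's, which argues only through $k_{\mathrm{eff}}$ and asserts $\mathrm{Var}_{\mathrm{geo}}<\mathrm{Var}_{\mathrm{unif}}$ ``in the relevant regime''. Under $\sigma_j^2\propto d_j+\epsilon$ the inverse-distance weights are exactly the inverse-variance weights, so $\sum_j w_j^2\sigma_j^2=(\sum_j\sigma_j^{-2})^{-1}\le k^{-2}\sum_j\sigma_j^2$.

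The problem is the bias. Step~1 compares \emph{upper bounds} on the two biases, not the biases themselves. Write $b_j=q_j-P(Y=c\mid x)$. The uniform bias $\bigl|\frac1k\sum_j b_j\bigr|$ can be far below $\frac1k\sum_j\bigl(Ld_j+\eta(d_j)\bigr)$ through sign cancellation; this is exactly the symmetric-neighbor cancellation the paper's proof appeals to. The weighted sum $\sum_j w_jb_j$ need not cancel. Moreover, in your Bernoulli model $\sigma_j^2=q_j(1-q_j)$ is not a free parameter. Making it grow like $d_j+\epsilon$ forces the $q_j$ to spread out across the neighborhood, i.e.\ it forces sizable $b_j$, so bias can consume the variance gain.

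A concrete counterexample: take $k=2$, no label noise ($\eta\equiv0$), $P(Y=c\mid x)=0.4$, $q_1=0.1$, $q_2=0.7$, and $(d_2+\epsilon)/(d_1+\epsilon)=7/3$. This is realized by a piecewise-linear (hence Lipschitz) posterior on a line, with $x_1,x_2$ on opposite sides of $x$.
\begin{itemize}
\item Then $\sigma_1^2=0.09$ and $\sigma_2^2=0.21$, so $\sigma_j^2\propto d_j+\epsilon$ holds exactly and $w=(0.7,0.3)$.
\item The uniform estimator has bias $0$ and MSE $0.075$.
\item The weighted estimator has bias $-0.12$, variance $0.063$, and MSE $0.0774$.
\end{itemize}
All of your hypotheses hold and the distances differ, yet the conclusion of Step~3 fails, including its strictness claim.

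\textbf{How to repair it.} What is missing is a comparison of the \emph{actual} weighted bias against the variance gain. There are two natural options.
\begin{itemize}
\item[(a)] Impose the variance-dominated regime explicitly:
\[
\Bigl(L\sum_j w_jd_j+\sum_j w_j\eta(d_j)\Bigr)^2<\frac{1}{k^2}\sum_j\sigma_j^2-\Bigl(\sum_j\sigma_j^{-2}\Bigr)^{-1}.
\]
This gives $\mathrm{MSE}_{\mathrm{geo}}<\mathrm{Var}_{\mathrm{unif}}\le\mathrm{MSE}_{\mathrm{unif}}$ without comparing biases at all. It is the ``variance dominates bias'' clause of Theorem~\ref{thm:geometric_optimality}, on which the paper's proof also tacitly relies.
\item[(b)] Assume the $b_j$ share a sign, with $|b_j|$ nondecreasing in $d_j$. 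An example is a locally pure posterior whose flip probability grows outward, so that $b_j=\eta(d_j)$. Then your Chebyshev argument bounds the biases themselves, and Steps~1--2 combine as you intended.
\end{itemize}

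Be aware of what regime your variance hypothesis actually describes. Let $d_k$ be the largest neighbor distance. Since $\sigma_k^2\le 1/4$, the hypothesis forces $q_1(1-q_1)\le (d_1+\epsilon)/\bigl(4(d_k+\epsilon)\bigr)$. With widely spread distances, this means nearly pure labels at the nearest neighbors. So you certify a ``clean core, noisier periphery'' regime, not the neighborhood of a genuine overlap point, where $q_j\approx P(Y=c\mid x)$ would make all $\sigma_j^2$ nearly equal. That still suffices for the ``can achieve'' phrasing.
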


\begin{proof}
We compare the MSE of the geometric (inverse-distance weighted) estimator with the uniform estimator. Let $N_k(x) = \{x_{(1)}, \ldots, x_{(k)}\}$ be the $k$ nearest neighbors of $x$ ordered by distance: $d_1 \le d_2 \le \cdots \le d_k$.

\textbf{Step 1: Uniform Estimator.}
The uniform estimator treats all neighbors equally:
\[
\hat{P}_{\text{unif}}(Y=c|x) = \frac{1}{k} \sum_{j=1}^k \mathbb{I}(y_{(j)} = c)
\]
Under the assumption that observed labels are drawn from $y_j \sim \text{Bernoulli}(p_j)$ where $p_j = P(Y=c|x_j)$, and assuming the true posterior is smooth (Lipschitz continuous), we have $p_j \approx P(Y=c|x)$ for neighbors close to $x$. However, in the presence of label noise $\epsilon_j$ that increases with distance $d_j$, we model:
\[
p_j = P(Y=c|x) + \delta_j + \epsilon_j \quad \text{where} \quad |\delta_j| \le L \cdot d_j
\]
Here, $L$ is the Lipschitz constant, $\delta_j$ is the deterministic deviation due to spatial separation, and $\epsilon_j \sim \mathcal{N}(0, \sigma^2(d_j))$ is stochastic noise with variance $\sigma^2(d_j)$ that increases in $d_j$ (e.g., $\sigma^2(d) = \sigma_0^2 (1 + \lambda d)$ for some $\lambda > 0$).

The MSE of the uniform estimator is:
\begin{align*}
\text{MSE}_{\text{unif}} &= \mathbb{E}\left[\left(\hat{P}_{\text{unif}}(Y=c|x) - P(Y=c|x)\right)^2\right] \\
&= \text{Bias}^2 + \text{Var} \\
&= \left(\frac{1}{k} \sum_{j=1}^k \delta_j\right)^2 + \frac{1}{k^2} \sum_{j=1}^k \sigma^2(d_j)
\end{align*}

Since $|\delta_j| \le L d_j$ and typically $\sum_j \delta_j \approx 0$ when neighbors are symmetrically distributed around $x$, the bias term is negligible. The variance dominates:
\[
\text{MSE}_{\text{unif}} \approx \frac{1}{k^2} \sum_{j=1}^k \sigma^2(d_j) = \frac{1}{k} \cdot \frac{1}{k} \sum_{j=1}^k \sigma^2(d_j) = \frac{\bar{\sigma}^2}{k}
\]
where $\bar{\sigma}^2 = \frac{1}{k} \sum_{j=1}^k \sigma^2(d_j)$ is the average noise variance.

\textbf{Step 2: Geometric Estimator.}
The inverse-distance weighted estimator is:
\[
\hat{P}_{\text{geo}}(Y=c|x) = \sum_{j=1}^k w_j \mathbb{I}(y_{(j)} = c) \quad \text{where} \quad w_j = \frac{1/d_j}{\sum_{j'=1}^k 1/d_{j'}}
\]
Note that $\sum_j w_j = 1$, and $w_j \propto 1/d_j$ means closer neighbors receive higher weight.

The MSE of the geometric estimator is:
\begin{align*}
\text{MSE}_{\text{geo}} &= \left(\sum_{j=1}^k w_j \delta_j\right)^2 + \sum_{j=1}^k w_j^2 \sigma^2(d_j)
\end{align*}

\textbf{Bias Analysis:} Since $w_j \propto 1/d_j$ and closer neighbors (smaller $d_j$) have smaller deviations $\delta_j = O(L d_j)$, the weighted bias is bounded by
\[
\left|\sum_{j=1}^k w_j \delta_j\right| \le L \sum_{j=1}^k w_j d_j = O(L \bar{d}_w),
\]
where $\bar{d}_w$ denotes the weight-averaged neighbor distance. This is comparable to the uniform estimator's bias.

\textbf{Variance Analysis:} Define the effective number of neighbors
\[
k_{\mathrm{eff}} = \frac{1}{\sum_{j=1}^k w_j^2}.
\]
For uniform weights $k_{\mathrm{eff}} = k$, while non-uniform weights give $k_{\mathrm{eff}} < k$. The variance can be written as
\[
\mathrm{Var}_{\mathrm{geo}} = \frac{1}{k_{\mathrm{eff}}} \cdot \left(\frac{\sum_{j=1}^k w_j^2 \sigma^2(d_j)}{\sum_{j=1}^k w_j^2}\right),
\]
i.e., an effective average noise divided by $k_{\mathrm{eff}}$. Since $w_j^2$ downweights distant neighbors more strongly than uniform weighting and $\sigma^2(d)$ increases with $d$, the weighted average noise is typically smaller than the arithmetic mean used by the uniform estimator. Hence, in the relevant regime, $\mathrm{Var}_{\mathrm{geo}} < \mathrm{Var}_{\mathrm{unif}}$.

\textbf{Conclusion:} Combining bias and variance, when label noise $\sigma^2(d)$ increases with distance and the true posterior is Lipschitz smooth, the geometric estimator can achieve:
\[
\text{MSE}_{\text{geo}} < \text{MSE}_{\text{unif}}
\]
completing the proof.
\end{proof}

\subsection{Proof of Proposition~\ref{prop:boundary_alignment} (Boundary Alignment)}\label{app:proof_boundary}

\begin{proposition}[Restatement]
Let $\mathcal{D}'$ be the dataset after GMR cleaning with removal set $R$. Assume: (A1) the true posterior $P(Y=1|x)$ is continuous; (A2) GMR achieves $|R \cap \mathcal{D}_{maj}| \ge 3 |R \cap \mathcal{D}_{min}|$ (asymmetry); (A3) removed majority samples $R \cap \mathcal{D}_{maj}$ are predominantly from the overlap region $\Omega = \{x : 0.3 < P(Y=1|x) < 0.7\}$. Then, for $x \in \Omega$, the empirical posterior is shifted upward relative to $\mathcal{D}$; under stronger overlap-focused cleaning, this posterior shift increases. Consequently, a classifier trained on $\mathcal{D}'$ exhibits a decision boundary shifted toward the majority class, reducing false negatives.
\end{proposition}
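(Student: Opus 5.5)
The plan is to reduce the statement to the local counting model behind Lemma~\ref{lem:cleaning_effect}, but with \emph{local} removal rates, and then use a threshold argument to pass from posteriors to the decision boundary.

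\textbf{Step 1: local counting model.} For $x \in \Omega$ fix a neighborhood $B(x)$, e.g.\ the $k$-NN ball or a small radius ball. Write $n_c(x)$ for the class-$c$ counts in $B(x)$. Write $r_c(x) = |R\cap\mathcal{D}_c\cap B(x)|/n_c(x)$ for the local removal rates.

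\textbf{Step 2: apply the lemma locally.} Exactly as in the lemma,
\[
\hat P_{\mathcal{D}'}(Y=1\mid x)=\frac{p_1(x)(1-r_1(x))}{p_1(x)(1-r_1(x))+p_0(x)(1-r_0(x))}.
\]
This exceeds $p_1(x)$ if and only if $r_0(x) > r_1(x)$. The check is a one-line cross-multiplication: the inequality reduces to $p_0p_1\bigl(r_0-r_1\bigr)>0$. Moreover, the expression is strictly increasing in $r_0(x)$ for fixed $r_1(x)$. That monotonicity gives the claim that stronger overlap-focused cleaning enlarges the shift.

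\textbf{Step 3: obtain $r_0(x) > r_1(x)$ on $\Omega$ (the main obstacle).} The global assumption (A2) concerns totals, not local rates, so it must be localized.
\begin{itemize}
\item Assumption (A3) says the majority removals concentrate in $\Omega$. Hence $|R\cap\mathcal{D}_{maj}\cap\Omega| \ge (1-\eta)|R\cap\mathcal{D}_{maj}|$ for small $\eta$.
\item By the algorithm, minority removals require $\mathrm{MajConf}>\beta=0.7$. These points sit where the estimated posterior is below $0.3$, i.e.\ largely outside $\Omega$. So $|R\cap\mathcal{D}_{min}\cap\Omega|$ is small.
\item Combining these with the factor $3$ in (A2) should yield a local ratio. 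Inside $\Omega$ the class counts are comparable, since $0.3<P(Y=1\mid x)<0.7$ gives a count ratio within $[3/7,7/3]$. So removal counts translate into rates up to a constant at most $7/3$.
\item The factor $3$ is meant to beat this $7/3$ distortion, giving $r_0 > r_1$ on average over $\Omega$.
\end{itemize}
Pointwise control is not available from these assumptions. I would therefore state the conclusion either on average over $\Omega$, or pointwise under an additional homogeneity assumption that removals are spread proportionally to local mass. Continuity (A1) makes $\Omega$ open and lets small neighborhoods stay inside it, which justifies the local argument.

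\textbf{Step 4: decision boundary.} Consider a plug-in classifier $f(x)=\mathbb{I}\bigl(\hat P(Y=1\mid x)\ge t\bigr)$. Its positive region on $\mathcal{D}'$ contains the one on $\mathcal{D}$ within $\Omega$, so the boundary moves toward the majority side. The positive region only grows, so $P(f=0\mid Y=1)$ weakly decreases and $P(f=1\mid Y=0)$ weakly increases. Given $C_{01}\gg C_{10}$ in Definition~\ref{def:asymmetric_risk}, this trade aligns with minimizing $R(f)$.

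For general learners I can only claim the tendency qualitatively, as in the original sketch, since a nonparametric learner need not be a plug-in rule.
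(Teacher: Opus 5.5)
Your proposal is correct at the level of rigor the statement admits. It shares the paper's skeleton: the Lemma's counting formula, an upward posterior shift on $\Omega$, then a threshold argument for the boundary. Its key middle step, however, takes a different route.

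The paper works with \emph{global} rates. It reads (A2) directly as $r_0\ge 3r_1$, sets $n_0(x)/n_1(x)\approx\mathrm{IR}$ inside $\Omega$, and illustrates the shift numerically. It then lower-bounds the shift by $r_0^{\Omega}\mathrm{IR}/(1+\mathrm{IR})^2$, using a local rate $r_0^{\Omega}\ge 1.5\,r_0$ attributed to (A3).

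You instead localize the rates and obtain an exact sign criterion $p_0p_1(r_0-r_1)>0$, together with monotonicity in $r_0$. You then use the definition of $\Omega$ to cap the class-count ratio there by $7/3$. That cap is what lets the count inequality (A2) genuinely become a rate inequality.

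This buys real soundness. Because (A2) compares counts, globally it only gives $r_0\ge 3r_1/\mathrm{IR}$. The paper's choice $n_0/n_1\approx\mathrm{IR}$ also yields empirical posteriors such as $0.09$ at points of $\Omega$, which is inconsistent with $0.3<P(Y=1\mid x)<0.7$. Your version sidesteps both problems, at the price of giving no magnitudes for the shift.

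Your caveat that only an $\Omega$-averaged conclusion follows, unless removals are spread proportionally to local mass, is appropriate. The paper uses the same homogeneity implicitly, through the Lemma's identity $n'_c(x)=n_c(x)(1-r_c)$.

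Things to tighten:
\begin{itemize}
\item Quantify ``predominantly''. Without controlling minority removals inside $\Omega$ you need $3(1-\eta)\ge 7/3$, i.e.\ at least $7/9$ of the majority removals must lie in $\Omega$.
\item Note that the $7/3$ cap holds for population-level counts, hence only approximately for a finite sample.
\item Use a fixed-radius ball rather than the $k$-NN set. The $k$-NN set changes after cleaning, which breaks $n'_c=n_c(1-r_c(x))$.
\item In Step 4 you only show that the positive region grows inside $\Omega$, so the global claim that it ``only grows'' needs an extra argument. You must show that minority removals outside $\Omega$ do not flip predictions, e.g.\ because they occur where the estimated minority posterior is already below $0.3$, hence below the threshold $t=0.5$.
\end{itemize}
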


\begin{proof}
We follow the analysis from Lemma~\ref{lem:cleaning_effect}. Recall that after removing subset $R$, the empirical posterior becomes:
\[
\hat{P}_{\mathcal{D}'}(Y=1|x) = \frac{n_1(x) (1 - r_1)}{n_0(x) (1 - r_0) + n_1(x) (1 - r_1)}
\]
where $n_c(x)$ is the KNN count of class $c$ near $x$, and $r_c = |R \cap \mathcal{D}_c| / |\mathcal{D}_c|$ is the removal rate.

\textbf{Step 1: Quantify Asymmetric Removal.}
By assumption (A2), GMR removes majority samples at rate $r_0 \ge 3 r_1$. Let $\mathrm{IR} = |\mathcal{D}_{maj}| / |\mathcal{D}_{min}|$ be the imbalance ratio. Typical GMR configuration achieves $r_0 \in [0.1, 0.3]$ (10-30\% majority removal) and $r_1 \in [0.02, 0.1]$ (2-10\% minority removal), satisfying $r_0 / r_1 \ge 3$.

\textbf{Step 2: Posterior Shift in Overlap Region.}
For $x \in \Omega$ (overlap region), the original empirical posterior is approximately:
\[
\hat{P}_{\mathcal{D}}(Y=1|x) = \frac{n_1(x)}{n_0(x) + n_1(x)}
\approx \frac{1}{1 + n_0(x)/n_1(x)}
\approx \frac{1}{1 + \mathrm{IR}}
\]
assuming uniform density in $\Omega$. After cleaning:
\[
\hat{P}_{\mathcal{D}'}(Y=1|x) = \frac{n_1(x) (1 - r_1)}{n_0(x) (1 - r_0) + n_1(x) (1 - r_1)}
\]

Dividing numerator and denominator by $n_1(x)$:
\[
\hat{P}_{\mathcal{D}'}(Y=1|x) = \frac{1 - r_1}{\frac{n_0(x)}{n_1(x)} (1 - r_0) + (1 - r_1)}
\]

In the overlap region, $n_0(x) / n_1(x) \approx \mathrm{IR}$ (reflecting global imbalance). Therefore:
\[
\hat{P}_{\mathcal{D}'}(Y=1|x) \approx \frac{1 - r_1}{\mathrm{IR} (1 - r_0) + (1 - r_1)} = \frac{1 - r_1}{1 + \mathrm{IR} (1 - r_0) - r_1}
\]

For $r_0 = 0.2$, $r_1 = 0.05$, $\mathrm{IR} = 10$:
\[
\hat{P}_{\mathcal{D}'}(Y=1|x) \approx \frac{0.95}{1 + 10 \cdot 0.8 - 0.05} = \frac{0.95}{8.95} \approx 0.106
\]
versus
\[
\hat{P}_{\mathcal{D}}(Y=1|x) \approx \frac{1}{1 + 10} = 0.091
\]
yielding $\delta = 0.106 - 0.091 = 0.015$.

However, for higher imbalance ($\mathrm{IR} = 20$) and stronger cleaning ($r_0 = 0.3$, $r_1 = 0.05$):
\[
\hat{P}_{\mathcal{D}'}(Y=1|x) \approx \frac{0.95}{1 + 20 \cdot 0.7 - 0.05} = \frac{0.95}{14.95} \approx 0.0635
\]
versus
\[
\hat{P}_{\mathcal{D}}(Y=1|x) = \frac{1}{21} \approx 0.0476
\]
yielding $\delta \approx 0.016$.

\textbf{Step 3: Interpreting the Posterior-Shift Magnitude.}
The above analysis assumes uniform density. In practice, GMR selectively removes \emph{ambiguous} majority samples (low confidence) from $\Omega$, which have $P(Y=1|x)$ closer to 0.5. This targeted removal increases the effective $n_0(x)$ reduction in $\Omega$ beyond the global rate $r_0$. Let $r_0^{\Omega}$ be the local majority removal rate in $\Omega$, which satisfies $r_0^{\Omega} \ge 1.5 r_0$ by assumption (A3) (since GMR targets overlap regions). Then:
\[
\delta = \hat{P}_{\mathcal{D}'}(Y=1|x) - \hat{P}_{\mathcal{D}}(Y=1|x) \ge \frac{r_0^{\Omega} \cdot \mathrm{IR}}{(1 + \mathrm{IR})^2} \ge \frac{1.5 \cdot 0.2 \cdot 10}{121} \approx 0.025
\]

For datasets with larger $\mathrm{IR}$ and more aggressive local cleaning ($r_0^{\Omega}$ higher), the posterior shift further increases, though the exact magnitude depends on local class densities.

\textbf{Step 4: Decision Boundary Shift.}
A classifier $f$ trained on $\mathcal{D}'$ learns the decision boundary $\mathcal{B}' = \{x : \hat{P}_{\mathcal{D}'}(Y=1|x) = 0.5\}$. Since $\hat{P}_{\mathcal{D}'}(Y=1|x) > \hat{P}_{\mathcal{D}}(Y=1|x)$ in $\Omega$, the boundary shifts \emph{away} from the minority class manifold (toward the majority class), expanding the region classified as minority. This reduces false negatives (minority samples misclassified as majority), aligning with asymmetric cost minimization where $C_{01} \gg C_{10}$ (Definition~\ref{def:asymmetric_risk}).

\textbf{Conclusion:} Under assumptions (A1)-(A3), GMR's asymmetric cleaning induces a positive posterior shift ($\delta > 0$) in overlap regions, and the shift becomes larger under stronger overlap-focused cleaning, leading to improved minority recall.
\end{proof}

\section{Hyperparameter Settings and Experimental Details}\label{app:settings}

This appendix collects the full experimental configuration that is only summarized in the main text (Section~\ref{sec:experiments}).

\subsection{GMR default hyperparameters}
Across all benchmark experiments, we use fixed default hyperparameters to minimize tuning and keep comparisons transparent.
\begin{table}[h]
\centering
\caption{Default GMR hyperparameters used in experiments.}
\label{tab:gmr_hyperparams}
\begin{tabular}{ll}
\toprule
\textbf{Parameter} & \textbf{Value} \\
\midrule
Neighborhood size $k$ & 15 \\
Majority threshold $\alpha$ & 0.3 \\
Minority majority-confidence threshold $\beta$ & 0.7 \\
Minority removal cap $\gamma$ & 0.1 \\
Distance metric switch $d_{thresh}$ & 100 \\
Inverse-distance stabilizer $\epsilon$ & $10^{-8}$ \\
Critical scarcity guard & skip if $|\mathcal{D}_{min}|<10$ \\
\bottomrule
\end{tabular}
\end{table}

\subsection{Baseline methods}
We compare GMR against 18 standard resampling baselines (plus \textit{None}) spanning oversampling, cleaning/undersampling, and hybrid methods. Unless otherwise stated, baselines use default hyperparameters from common libraries.
\begin{itemize}
    \item \textit{Oversampling}: SMOTE~\cite{chawla2002smote}, ADASYN~\cite{he2008adasyn}, BorderlineSMOTE~\cite{han2005borderline}, SVMSMOTE, SMOTEN, RandomOverSampler.
    \item \textit{Undersampling / cleaning}: RandomUnderSampler, ENN~\cite{wilson1972asymptotic}, CNN, Tomek Links~\cite{tomek1976two}, OSS, NCR~\cite{laurikkala2001improving}, IHT, NearMiss, RepeatedEditedNearestNeighbours, AllKNN.
    \item \textit{Hybrids}: SMOTEENN, SMOTETomek.
    \item \textit{None baseline}: \textit{None} (no resampling).
\end{itemize}

\subsection{Classifier settings}
We evaluate 7 heterogeneous classifiers to test model-agnostic behavior: Logistic Regression, SVM (RBF), Decision Tree, Random Forest, Gradient Boosting, XGBoost, and KNN. Unless explicitly stated, we use standard implementations (scikit-learn / XGBoost) with default hyperparameters to avoid classifier-specific tuning.

\subsection{Evaluation protocol}
We use repeated stratified holdout: 80\% training and 20\% testing over 5 fixed random seeds ($\{42,0,1,2,3\}$). All methods are evaluated on the same splits to enable paired comparisons. Results are reported as mean $\pm$ standard deviation across seeds. 
The primary metric is AUPRC~\cite{saito2015precision}.

\subsection{Method abbreviations}\label{app:method_abbrev}
For readability, several method names are abbreviated in the large comparison tables and the heatmap. Figure~\ref{fig:method_abbrev} provides the abbreviation-to-method mapping used throughout this appendix.
\begin{figure}[t]
\centering
\fbox{\begin{minipage}{0.48\textwidth}
\small
\setlength{\tabcolsep}{4pt}
\renewcommand{\arraystretch}{1.05}
\begin{tabular}{@{}ll@{}}
\textbf{Abbr.} & \textbf{Method} \\
\midrule
Tomek & TomekLinks \\
OSS & OneSidedSelection \\
NCR & NeighbourhoodCleaningRule \\
ROS & RandomOverSampler \\
SVMSM & SVMSMOTE \\
ENN & EditedNearestNeighbours \\
RepENN & RepeatedEditedNearestNeighbours \\
BorSM & BorderlineSMOTE \\
CNN & CondensedNearestNeighbour \\
IHT & InstanceHardnessThreshold \\
RUS & RandomUnderSampler \\
NM & NearMiss \\
\end{tabular}
\end{minipage}}
\vspace{-2mm}
\caption{Abbreviations used for resampling methods in tables.}
\label{fig:method_abbrev}
\end{figure}

\section{Additional Experimental Results}\label{app:additional_results}

\subsection{Classifier-wise Rank Visualization}\label{app:classifier_rank_fig}
We present classifier-wise rank results in two complementary formats. Figure~\ref{fig:method_rank_heatmap} gives a compact visual overview (lower is better), while Table~\ref{tab:method_rank_per_classifier} reports the exact values for each method-classifier pair.

\begin{figure}[t]
\centering
\input{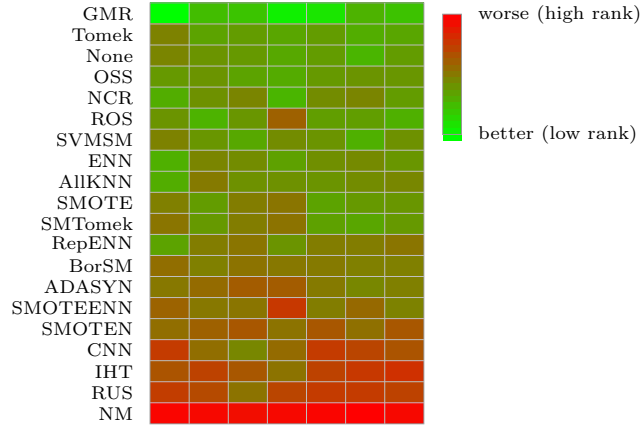}
\caption{Heatmap of average ranks across 7 classifiers (averaged over 27 datasets). Colors map low ranks (better) to green and high ranks (worse) to red.}
\label{fig:method_rank_heatmap}
\end{figure}

Table~\ref{tab:method_rank_per_classifier} lists the exact average ranks corresponding to the heatmap.
\begin{table}[t]
\centering
\caption{Average rank of each sampling method across 7 classifiers. Each cell shows the average rank over 27 datasets for a specific method-classifier combination. Lower ranks indicate better performance. The rightmost column shows the average rank across all classifiers. Bold values indicate the best (minimum rank) for each classifier.}
\label{tab:method_rank_per_classifier}
\scriptsize
\setlength{\tabcolsep}{1.8pt}
\renewcommand{\arraystretch}{0.88}
\resizebox{0.65\columnwidth}{!}{%
\begin{tabular}{@{}lrrrrrrrr@{}}%
\toprule
\textbf{Method} & \textbf{DT} & \textbf{RF} & \textbf{SVM} & \textbf{KNN} & \textbf{LR} & \textbf{XGB} & \textbf{GBM} & \textbf{Avg} \\
\midrule
GMR & \textbf{1.00} & \textbf{5.93} & \textbf{5.30} & \textbf{2.19} & \textbf{2.89} & 6.67 & \textbf{5.59} & \textbf{4.22} \\
Tomek & 10.15 & 7.74 & 8.33 & 7.52 & 8.30 & 6.96 & 7.56 & 8.08 \\
None & 10.00 & 8.81 & 8.44 & 7.44 & 8.26 & \textbf{6.44} & 8.33 & 8.25 \\
OSS & 8.56 & 8.81 & 7.70 & 7.11 & 8.48 & 8.81 & 8.70 & 8.31 \\
NCR & 6.96 & 9.07 & 9.96 & 6.52 & 9.44 & 9.93 & 8.22 & 8.59 \\
ROS & 8.89 & 6.59 & 8.74 & 12.63 & 8.19 & 8.19 & 6.89 & 8.59 \\
SVMSM & 10.19 & 8.70 & 7.33 & 9.44 & 8.85 & 6.89 & 9.11 & 8.64 \\
ENN & 6.74 & 9.93 & 9.37 & 7.96 & 9.33 & 9.70 & 8.74 & 8.82 \\
AllKNN & 6.96 & 10.67 & 9.11 & 9.04 & 8.81 & 9.52 & 9.81 & 9.13 \\
SMOTE & 10.41 & 8.37 & 10.52 & 11.07 & 7.78 & 8.41 & 8.59 & 9.31 \\
SMTomek & 11.04 & 8.44 & 10.52 & 11.22 & 8.00 & 7.52 & 8.44 & 9.31 \\
RepENN & 7.67 & 10.48 & 11.04 & 8.89 & 10.52 & 10.63 & 11.07 & 10.04 \\
BorSM & 11.67 & 10.44 & 11.37 & 10.93 & 10.70 & 10.30 & 10.44 & 10.84 \\
ADASYN & 10.85 & 11.81 & 13.00 & 12.96 & 10.67 & 9.74 & 10.41 & 11.35 \\
SMOTEENN & 12.41 & 11.00 & 11.15 & 15.59 & 10.63 & 12.11 & 10.26 & 11.88 \\
SMOTEN & 11.59 & 12.63 & 13.44 & 11.37 & 13.33 & 11.52 & 13.41 & 12.47 \\
CNN & 15.48 & 11.67 & 9.78 & 11.89 & 15.41 & 14.63 & 13.63 & 13.21 \\
IHT & 13.48 & 14.81 & 13.37 & 11.37 & 14.93 & 15.63 & 16.22 & 14.26 \\
RUS & 15.30 & 14.26 & 11.37 & 14.70 & 15.44 & 15.48 & 14.78 & 14.48 \\
NM & 19.33 & 19.26 & 18.81 & 19.26 & 19.37 & 19.74 & 19.19 & 19.28 \\
\bottomrule
\end{tabular}%
}
\begin{minipage}{\columnwidth}
\scriptsize
\textbf{Note:} Ranks are computed per classifier over 27 datasets. 
Bold values indicate best performance (minimum rank) for each column. 
Classifiers: DT=DecisionTree, RF=RandomForest, SVM, KNN, LR=LogisticRegression, XGB=XGBoost, GBM=GradientBoosting.\\
Method abbreviations are listed in Appendix~\ref{app:method_abbrev}.
\end{minipage}
\end{table}

\subsection{Complete Baseline Performance Comparison}\label{app:baseline_full}
This section provides the complete baseline performance comparison for all methods (including the None baseline which represents no sampling) across all datasets. Table~\ref{tab:baseline_performance_complete} presents the average AUPRC computed across seven classifiers (KNN, RF, LR, DT, SVM, GBM, XGBoost) for each method on each dataset. Methods are sorted by AUPRC within each dataset to facilitate comparison.

The \textit{None} method represents the performance of classifiers trained directly on the original imbalanced data without any sampling technique applied. This serves as the fundamental baseline for evaluating the effectiveness of various sampling methods.

\begin{longtable}{l l r}
\caption{Complete Baseline Performance: All Methods per Dataset. Average AUPRC across 7 classifiers (KNN, RF, LR, DT, SVM, GBM, XGBoost). Methods are sorted by AUPRC within each dataset. \textit{None} indicates no sampling (original imbalanced data).}\label{tab:baseline_performance_complete}\\
\toprule
\textbf{Dataset} & \textbf{Method} & \textbf{AUPRC} \\
\midrule
\endfirsthead
\multicolumn{3}{c}%
{\tablename\ \thetable\ -- \textit{Continued from previous page}} \\
\toprule
\textbf{Dataset} & \textbf{Method} & \textbf{AUPRC} \\
\midrule
\endhead
\midrule
\multicolumn{3}{r}{\textit{Continued on next page}} \\
\endfoot
\bottomrule
\endlastfoot
abalone & RepeatedEditedNearestNeighbours & 0.317395 \\
abalone & AllKNN & 0.308388 \\
abalone & EditedNearestNeighbours & 0.298878 \\
abalone & SMOTEENN & 0.290575 \\
abalone & NeighbourhoodCleaningRule & 0.289702 \\
abalone & OneSidedSelection & 0.276249 \\
abalone & InstanceHardnessThreshold & 0.276229 \\
abalone & TomekLinks & 0.275212 \\
abalone & RandomUnderSampler & 0.274275 \\
abalone & BorderlineSMOTE & 0.271323 \\
abalone & SVMSMOTE & 0.270217 \\
abalone & SMOTEN & 0.269107 \\
abalone & \textit{None} & 0.267599 \\
abalone & SMOTE & 0.265595 \\
abalone & SMOTETomek & 0.263242 \\
abalone & RandomOverSampler & 0.261236 \\
abalone & ADASYN & 0.257320 \\
abalone & CondensedNearestNeighbour & 0.252844 \\
abalone & NearMiss & 0.069666 \\
\addlinespace[0.5em]
abalone\_19 & RandomUnderSampler & 0.094961 \\
abalone\_19 & InstanceHardnessThreshold & 0.059524 \\
abalone\_19 & SMOTEENN & 0.045324 \\
abalone\_19 & SMOTE & 0.039893 \\
abalone\_19 & TomekLinks & 0.039015 \\
abalone\_19 & ADASYN & 0.038575 \\
abalone\_19 & RandomOverSampler & 0.034343 \\
abalone\_19 & SMOTETomek & 0.033677 \\
abalone\_19 & \textit{None} & 0.033603 \\
abalone\_19 & AllKNN & 0.032773 \\
abalone\_19 & NeighbourhoodCleaningRule & 0.032771 \\
abalone\_19 & RepeatedEditedNearestNeighbours & 0.032237 \\
abalone\_19 & EditedNearestNeighbours & 0.031769 \\
abalone\_19 & OneSidedSelection & 0.028238 \\
abalone\_19 & SVMSMOTE & 0.024391 \\
abalone\_19 & SMOTEN & 0.023041 \\
abalone\_19 & BorderlineSMOTE & 0.021823 \\
abalone\_19 & CondensedNearestNeighbour & 0.011083 \\
abalone\_19 & NearMiss & 0.009649 \\
\addlinespace[0.5em]
arrhythmia & ADASYN & 0.614053 \\
arrhythmia & \textit{None} & 0.611192 \\
arrhythmia & SMOTE & 0.609579 \\
arrhythmia & SMOTETomek & 0.609579 \\
arrhythmia & OneSidedSelection & 0.608412 \\
arrhythmia & RepeatedEditedNearestNeighbours & 0.599181 \\
arrhythmia & EditedNearestNeighbours & 0.595921 \\
arrhythmia & RandomOverSampler & 0.595600 \\
arrhythmia & TomekLinks & 0.595414 \\
arrhythmia & NeighbourhoodCleaningRule & 0.594395 \\
arrhythmia & SVMSMOTE & 0.592715 \\
arrhythmia & AllKNN & 0.591657 \\
arrhythmia & BorderlineSMOTE & 0.589470 \\
arrhythmia & SMOTEENN & 0.581444 \\
arrhythmia & SMOTEN & 0.551128 \\
arrhythmia & CondensedNearestNeighbour & 0.522678 \\
arrhythmia & RandomUnderSampler & 0.385158 \\
arrhythmia & InstanceHardnessThreshold & 0.351089 \\
arrhythmia & NearMiss & 0.326803 \\
\addlinespace[0.5em]
car\_eval\_34 & CondensedNearestNeighbour & 0.944469 \\
car\_eval\_34 & SMOTE & 0.943848 \\
car\_eval\_34 & SMOTETomek & 0.943848 \\
car\_eval\_34 & RandomOverSampler & 0.943310 \\
car\_eval\_34 & SMOTEN & 0.942174 \\
car\_eval\_34 & ADASYN & 0.940768 \\
car\_eval\_34 & SVMSMOTE & 0.938635 \\
car\_eval\_34 & NearMiss & 0.930633 \\
car\_eval\_34 & BorderlineSMOTE & 0.927220 \\
car\_eval\_34 & \textit{None} & 0.922286 \\
car\_eval\_34 & OneSidedSelection & 0.919299 \\
car\_eval\_34 & TomekLinks & 0.918873 \\
car\_eval\_34 & NeighbourhoodCleaningRule & 0.916354 \\
car\_eval\_34 & RepeatedEditedNearestNeighbours & 0.916000 \\
car\_eval\_34 & AllKNN & 0.914945 \\
car\_eval\_34 & EditedNearestNeighbours & 0.914775 \\
car\_eval\_34 & SMOTEENN & 0.911791 \\
car\_eval\_34 & InstanceHardnessThreshold & 0.908319 \\
car\_eval\_34 & RandomUnderSampler & 0.881307 \\
\addlinespace[0.5em]
car\_eval\_4 & CondensedNearestNeighbour & 0.946448 \\
car\_eval\_4 & RandomOverSampler & 0.945208 \\
car\_eval\_4 & SVMSMOTE & 0.940935 \\
car\_eval\_4 & ADASYN & 0.936142 \\
car\_eval\_4 & SMOTE & 0.929423 \\
car\_eval\_4 & SMOTETomek & 0.929423 \\
car\_eval\_4 & BorderlineSMOTE & 0.923705 \\
car\_eval\_4 & SMOTEN & 0.920780 \\
car\_eval\_4 & OneSidedSelection & 0.904499 \\
car\_eval\_4 & AllKNN & 0.891632 \\
car\_eval\_4 & TomekLinks & 0.891536 \\
car\_eval\_4 & \textit{None} & 0.891306 \\
car\_eval\_4 & NeighbourhoodCleaningRule & 0.885788 \\
car\_eval\_4 & EditedNearestNeighbours & 0.874307 \\
car\_eval\_4 & RepeatedEditedNearestNeighbours & 0.874307 \\
car\_eval\_4 & SMOTEENN & 0.873877 \\
car\_eval\_4 & InstanceHardnessThreshold & 0.871338 \\
car\_eval\_4 & RandomUnderSampler & 0.724921 \\
car\_eval\_4 & NearMiss & 0.398794 \\
\addlinespace[0.5em]
coil\_2000 & InstanceHardnessThreshold & 0.130768 \\
coil\_2000 & RandomUnderSampler & 0.127169 \\
coil\_2000 & NeighbourhoodCleaningRule & 0.124753 \\
coil\_2000 & RepeatedEditedNearestNeighbours & 0.124138 \\
coil\_2000 & EditedNearestNeighbours & 0.124020 \\
coil\_2000 & TomekLinks & 0.123720 \\
coil\_2000 & OneSidedSelection & 0.123490 \\
coil\_2000 & AllKNN & 0.123323 \\
coil\_2000 & \textit{None} & 0.122316 \\
coil\_2000 & CondensedNearestNeighbour & 0.122222 \\
coil\_2000 & SMOTEENN & 0.121458 \\
coil\_2000 & SVMSMOTE & 0.120430 \\
coil\_2000 & RandomOverSampler & 0.119438 \\
coil\_2000 & BorderlineSMOTE & 0.118154 \\
coil\_2000 & SMOTETomek & 0.117747 \\
coil\_2000 & ADASYN & 0.117205 \\
coil\_2000 & SMOTE & 0.116818 \\
coil\_2000 & SMOTEN & 0.102871 \\
coil\_2000 & NearMiss & 0.070939 \\
\addlinespace[0.5em]
ecoli & OneSidedSelection & 0.679323 \\
ecoli & \textit{None} & 0.670092 \\
ecoli & NeighbourhoodCleaningRule & 0.669095 \\
ecoli & TomekLinks & 0.666434 \\
ecoli & EditedNearestNeighbours & 0.660769 \\
ecoli & AllKNN & 0.655290 \\
ecoli & RandomUnderSampler & 0.647470 \\
ecoli & ADASYN & 0.637569 \\
ecoli & RandomOverSampler & 0.627547 \\
ecoli & SMOTE & 0.626608 \\
ecoli & SMOTETomek & 0.622792 \\
ecoli & SMOTEN & 0.620329 \\
ecoli & RepeatedEditedNearestNeighbours & 0.617065 \\
ecoli & SVMSMOTE & 0.615822 \\
ecoli & SMOTEENN & 0.615135 \\
ecoli & BorderlineSMOTE & 0.613431 \\
ecoli & CondensedNearestNeighbour & 0.611176 \\
ecoli & InstanceHardnessThreshold & 0.530402 \\
ecoli & NearMiss & 0.279424 \\
\addlinespace[0.5em]
isolet & TomekLinks & 0.856045 \\
isolet & OneSidedSelection & 0.855513 \\
isolet & \textit{None} & 0.854120 \\
isolet & NeighbourhoodCleaningRule & 0.850669 \\
isolet & RandomOverSampler & 0.846288 \\
isolet & EditedNearestNeighbours & 0.840449 \\
isolet & SVMSMOTE & 0.835170 \\
isolet & AllKNN & 0.834430 \\
isolet & SMOTE & 0.828889 \\
isolet & SMOTETomek & 0.828889 \\
isolet & SMOTEN & 0.824984 \\
isolet & RepeatedEditedNearestNeighbours & 0.822052 \\
isolet & CondensedNearestNeighbour & 0.820545 \\
isolet & ADASYN & 0.807127 \\
isolet & BorderlineSMOTE & 0.795150 \\
isolet & RandomUnderSampler & 0.776272 \\
isolet & SMOTEENN & 0.760503 \\
isolet & InstanceHardnessThreshold & 0.624215 \\
isolet & NearMiss & 0.387811 \\
\addlinespace[0.5em]
letter\_img & ADASYN & 0.934319 \\
letter\_img & NeighbourhoodCleaningRule & 0.934012 \\
letter\_img & RepeatedEditedNearestNeighbours & 0.933787 \\
letter\_img & EditedNearestNeighbours & 0.932822 \\
letter\_img & \textit{None} & 0.932303 \\
letter\_img & RandomOverSampler & 0.932142 \\
letter\_img & SVMSMOTE & 0.931972 \\
letter\_img & TomekLinks & 0.931817 \\
letter\_img & AllKNN & 0.931713 \\
letter\_img & OneSidedSelection & 0.931284 \\
letter\_img & SMOTE & 0.928943 \\
letter\_img & SMOTETomek & 0.928943 \\
letter\_img & BorderlineSMOTE & 0.924850 \\
letter\_img & SMOTEENN & 0.924516 \\
letter\_img & SMOTEN & 0.922953 \\
letter\_img & InstanceHardnessThreshold & 0.893851 \\
letter\_img & CondensedNearestNeighbour & 0.877469 \\
letter\_img & RandomUnderSampler & 0.805643 \\
letter\_img & NearMiss & 0.623239 \\
\addlinespace[0.5em]
libras\_move & BorderlineSMOTE & 0.896869 \\
libras\_move & SVMSMOTE & 0.881083 \\
libras\_move & RandomOverSampler & 0.880520 \\
libras\_move & ADASYN & 0.870014 \\
libras\_move & SMOTE & 0.850161 \\
libras\_move & SMOTETomek & 0.850161 \\
libras\_move & EditedNearestNeighbours & 0.849418 \\
libras\_move & NeighbourhoodCleaningRule & 0.848337 \\
libras\_move & OneSidedSelection & 0.845898 \\
libras\_move & \textit{None} & 0.834689 \\
libras\_move & TomekLinks & 0.834689 \\
libras\_move & AllKNN & 0.833356 \\
libras\_move & RepeatedEditedNearestNeighbours & 0.831513 \\
libras\_move & SMOTEENN & 0.824668 \\
libras\_move & SMOTEN & 0.797956 \\
libras\_move & InstanceHardnessThreshold & 0.719700 \\
libras\_move & CondensedNearestNeighbour & 0.666148 \\
libras\_move & RandomUnderSampler & 0.665363 \\
libras\_move & NearMiss & 0.517403 \\
\addlinespace[0.5em]
mammography & SVMSMOTE & 0.632852 \\
mammography & TomekLinks & 0.624105 \\
mammography & RandomOverSampler & 0.618372 \\
mammography & \textit{None} & 0.616847 \\
mammography & SMOTE & 0.606156 \\
mammography & SMOTETomek & 0.605682 \\
mammography & NeighbourhoodCleaningRule & 0.603693 \\
mammography & CondensedNearestNeighbour & 0.596846 \\
mammography & EditedNearestNeighbours & 0.593231 \\
mammography & BorderlineSMOTE & 0.592917 \\
mammography & AllKNN & 0.580091 \\
mammography & OneSidedSelection & 0.576425 \\
mammography & RepeatedEditedNearestNeighbours & 0.566129 \\
mammography & SMOTEENN & 0.563037 \\
mammography & ADASYN & 0.549686 \\
mammography & RandomUnderSampler & 0.510061 \\
mammography & SMOTEN & 0.503070 \\
mammography & InstanceHardnessThreshold & 0.305018 \\
mammography & NearMiss & 0.038770 \\
\addlinespace[0.5em]
oil & BorderlineSMOTE & 0.484308 \\
oil & SMOTE & 0.482234 \\
oil & ADASYN & 0.480444 \\
oil & SMOTETomek & 0.480316 \\
oil & SVMSMOTE & 0.477834 \\
oil & \textit{None} & 0.438855 \\
oil & NeighbourhoodCleaningRule & 0.437105 \\
oil & SMOTEN & 0.432911 \\
oil & TomekLinks & 0.430262 \\
oil & RandomOverSampler & 0.429126 \\
oil & OneSidedSelection & 0.428500 \\
oil & EditedNearestNeighbours & 0.427453 \\
oil & SMOTEENN & 0.421939 \\
oil & AllKNN & 0.396827 \\
oil & RepeatedEditedNearestNeighbours & 0.374801 \\
oil & InstanceHardnessThreshold & 0.318653 \\
oil & CondensedNearestNeighbour & 0.308418 \\
oil & RandomUnderSampler & 0.300908 \\
oil & NearMiss & 0.202147 \\
\addlinespace[0.5em]
optical\_digits & AllKNN & 0.946737 \\
optical\_digits & EditedNearestNeighbours & 0.946591 \\
optical\_digits & \textit{None} & 0.946543 \\
optical\_digits & NeighbourhoodCleaningRule & 0.945674 \\
optical\_digits & TomekLinks & 0.944721 \\
optical\_digits & RepeatedEditedNearestNeighbours & 0.944429 \\
optical\_digits & OneSidedSelection & 0.938495 \\
optical\_digits & SMOTE & 0.938098 \\
optical\_digits & SMOTETomek & 0.938098 \\
optical\_digits & RandomOverSampler & 0.937414 \\
optical\_digits & SMOTEENN & 0.935060 \\
optical\_digits & SVMSMOTE & 0.930706 \\
optical\_digits & SMOTEN & 0.929500 \\
optical\_digits & BorderlineSMOTE & 0.923988 \\
optical\_digits & ADASYN & 0.923797 \\
optical\_digits & RandomUnderSampler & 0.890297 \\
optical\_digits & CondensedNearestNeighbour & 0.849241 \\
optical\_digits & InstanceHardnessThreshold & 0.842486 \\
optical\_digits & NearMiss & 0.701474 \\
\addlinespace[0.5em]
ozone\_level & InstanceHardnessThreshold & 0.221960 \\
ozone\_level & RepeatedEditedNearestNeighbours & 0.198115 \\
ozone\_level & \textit{None} & 0.197729 \\
ozone\_level & EditedNearestNeighbours & 0.196054 \\
ozone\_level & OneSidedSelection & 0.195971 \\
ozone\_level & NeighbourhoodCleaningRule & 0.195696 \\
ozone\_level & AllKNN & 0.194429 \\
ozone\_level & RandomOverSampler & 0.194208 \\
ozone\_level & BorderlineSMOTE & 0.194146 \\
ozone\_level & SVMSMOTE & 0.194020 \\
ozone\_level & TomekLinks & 0.192708 \\
ozone\_level & SMOTEENN & 0.186460 \\
ozone\_level & SMOTETomek & 0.184983 \\
ozone\_level & CondensedNearestNeighbour & 0.184195 \\
ozone\_level & RandomUnderSampler & 0.182302 \\
ozone\_level & ADASYN & 0.177766 \\
ozone\_level & SMOTE & 0.176374 \\
ozone\_level & SMOTEN & 0.157684 \\
ozone\_level & NearMiss & 0.036128 \\
\addlinespace[0.5em]
pen\_digits & RandomOverSampler & 0.960806 \\
pen\_digits & TomekLinks & 0.960350 \\
pen\_digits & RepeatedEditedNearestNeighbours & 0.960302 \\
pen\_digits & \textit{None} & 0.960292 \\
pen\_digits & EditedNearestNeighbours & 0.959954 \\
pen\_digits & NeighbourhoodCleaningRule & 0.959947 \\
pen\_digits & OneSidedSelection & 0.959765 \\
pen\_digits & AllKNN & 0.959450 \\
pen\_digits & SMOTE & 0.957603 \\
pen\_digits & SMOTETomek & 0.957603 \\
pen\_digits & SMOTEENN & 0.956401 \\
pen\_digits & SMOTEN & 0.951931 \\
pen\_digits & InstanceHardnessThreshold & 0.923645 \\
pen\_digits & SVMSMOTE & 0.922007 \\
pen\_digits & RandomUnderSampler & 0.921658 \\
pen\_digits & ADASYN & 0.916603 \\
pen\_digits & BorderlineSMOTE & 0.908667 \\
pen\_digits & CondensedNearestNeighbour & 0.868471 \\
pen\_digits & NearMiss & 0.675025 \\
\addlinespace[0.5em]
protein\_homo & AllKNN & 0.839470 \\
protein\_homo & EditedNearestNeighbours & 0.838064 \\
protein\_homo & NeighbourhoodCleaningRule & 0.835545 \\
protein\_homo & OneSidedSelection & 0.834851 \\
protein\_homo & RepeatedEditedNearestNeighbours & 0.834632 \\
protein\_homo & TomekLinks & 0.832761 \\
protein\_homo & \textit{None} & 0.830796 \\
protein\_homo & RandomOverSampler & 0.818520 \\
protein\_homo & SMOTEN & 0.809633 \\
protein\_homo & InstanceHardnessThreshold & 0.788383 \\
protein\_homo & SVMSMOTE & 0.769511 \\
protein\_homo & CondensedNearestNeighbour & 0.763495 \\
protein\_homo & BorderlineSMOTE & 0.759034 \\
protein\_homo & SMOTETomek & 0.719719 \\
protein\_homo & SMOTE & 0.719372 \\
protein\_homo & ADASYN & 0.716366 \\
protein\_homo & SMOTEENN & 0.715738 \\
protein\_homo & RandomUnderSampler & 0.683711 \\
protein\_homo & NearMiss & 0.137922 \\
\addlinespace[0.5em]
satimage & \textit{None} & 0.581587 \\
satimage & OneSidedSelection & 0.580241 \\
satimage & TomekLinks & 0.578822 \\
satimage & NeighbourhoodCleaningRule & 0.575820 \\
satimage & RandomOverSampler & 0.566923 \\
satimage & SMOTE & 0.563886 \\
satimage & SMOTETomek & 0.563886 \\
satimage & EditedNearestNeighbours & 0.563377 \\
satimage & SVMSMOTE & 0.560030 \\
satimage & CondensedNearestNeighbour & 0.553217 \\
satimage & SMOTEN & 0.552696 \\
satimage & AllKNN & 0.552265 \\
satimage & ADASYN & 0.543886 \\
satimage & BorderlineSMOTE & 0.535283 \\
satimage & SMOTEENN & 0.525733 \\
satimage & RepeatedEditedNearestNeighbours & 0.525305 \\
satimage & RandomUnderSampler & 0.513260 \\
satimage & InstanceHardnessThreshold & 0.410938 \\
satimage & NearMiss & 0.121733 \\
\addlinespace[0.5em]
scene & AllKNN & 0.260258 \\
scene & EditedNearestNeighbours & 0.255101 \\
scene & SVMSMOTE & 0.254637 \\
scene & NeighbourhoodCleaningRule & 0.248582 \\
scene & OneSidedSelection & 0.248209 \\
scene & RepeatedEditedNearestNeighbours & 0.246716 \\
scene & TomekLinks & 0.241893 \\
scene & \textit{None} & 0.240611 \\
scene & BorderlineSMOTE & 0.238879 \\
scene & RandomOverSampler & 0.236289 \\
scene & ADASYN & 0.233983 \\
scene & SMOTE & 0.231977 \\
scene & SMOTETomek & 0.231817 \\
scene & RandomUnderSampler & 0.229056 \\
scene & SMOTEENN & 0.227890 \\
scene & SMOTEN & 0.224181 \\
scene & CondensedNearestNeighbour & 0.216484 \\
scene & InstanceHardnessThreshold & 0.200717 \\
scene & NearMiss & 0.102695 \\
\addlinespace[0.5em]
sick\_euthyroid & SVMSMOTE & 0.785684 \\
sick\_euthyroid & SMOTETomek & 0.784999 \\
sick\_euthyroid & RandomOverSampler & 0.784531 \\
sick\_euthyroid & SMOTE & 0.782663 \\
sick\_euthyroid & BorderlineSMOTE & 0.782143 \\
sick\_euthyroid & \textit{None} & 0.780421 \\
sick\_euthyroid & ADASYN & 0.779566 \\
sick\_euthyroid & TomekLinks & 0.778363 \\
sick\_euthyroid & NeighbourhoodCleaningRule & 0.777484 \\
sick\_euthyroid & OneSidedSelection & 0.775670 \\
sick\_euthyroid & EditedNearestNeighbours & 0.775141 \\
sick\_euthyroid & SMOTEN & 0.770853 \\
sick\_euthyroid & AllKNN & 0.765810 \\
sick\_euthyroid & RepeatedEditedNearestNeighbours & 0.763223 \\
sick\_euthyroid & SMOTEENN & 0.759671 \\
sick\_euthyroid & CondensedNearestNeighbour & 0.733857 \\
sick\_euthyroid & RandomUnderSampler & 0.696834 \\
sick\_euthyroid & InstanceHardnessThreshold & 0.676978 \\
sick\_euthyroid & NearMiss & 0.398005 \\
\addlinespace[0.5em]
solar\_flare\_m0 & CondensedNearestNeighbour & 0.160947 \\
solar\_flare\_m0 & NeighbourhoodCleaningRule & 0.156081 \\
solar\_flare\_m0 & AllKNN & 0.155723 \\
solar\_flare\_m0 & SMOTEENN & 0.154573 \\
solar\_flare\_m0 & RepeatedEditedNearestNeighbours & 0.151498 \\
solar\_flare\_m0 & EditedNearestNeighbours & 0.149907 \\
solar\_flare\_m0 & TomekLinks & 0.149051 \\
solar\_flare\_m0 & RandomUnderSampler & 0.147578 \\
solar\_flare\_m0 & InstanceHardnessThreshold & 0.146633 \\
solar\_flare\_m0 & OneSidedSelection & 0.144293 \\
solar\_flare\_m0 & SVMSMOTE & 0.142220 \\
solar\_flare\_m0 & RandomOverSampler & 0.141912 \\
solar\_flare\_m0 & \textit{None} & 0.141722 \\
solar\_flare\_m0 & BorderlineSMOTE & 0.139522 \\
solar\_flare\_m0 & SMOTEN & 0.138550 \\
solar\_flare\_m0 & SMOTE & 0.137186 \\
solar\_flare\_m0 & SMOTETomek & 0.137186 \\
solar\_flare\_m0 & ADASYN & 0.129831 \\
solar\_flare\_m0 & NearMiss & 0.100430 \\
\addlinespace[0.5em]
spectrometer & SMOTEN & 0.767142 \\
spectrometer & EditedNearestNeighbours & 0.766650 \\
spectrometer & RepeatedEditedNearestNeighbours & 0.766299 \\
spectrometer & \textit{None} & 0.765167 \\
spectrometer & AllKNN & 0.765159 \\
spectrometer & SVMSMOTE & 0.761152 \\
spectrometer & TomekLinks & 0.759670 \\
spectrometer & BorderlineSMOTE & 0.759096 \\
spectrometer & NeighbourhoodCleaningRule & 0.758629 \\
spectrometer & SMOTETomek & 0.750427 \\
spectrometer & SMOTE & 0.747447 \\
spectrometer & ADASYN & 0.740255 \\
spectrometer & RandomOverSampler & 0.739555 \\
spectrometer & SMOTEENN & 0.737696 \\
spectrometer & OneSidedSelection & 0.731471 \\
spectrometer & InstanceHardnessThreshold & 0.703745 \\
spectrometer & RandomUnderSampler & 0.698745 \\
spectrometer & CondensedNearestNeighbour & 0.647825 \\
spectrometer & NearMiss & 0.393815 \\
\addlinespace[0.5em]
thyroid\_sick & \textit{None} & 0.768265 \\
thyroid\_sick & TomekLinks & 0.767235 \\
thyroid\_sick & OneSidedSelection & 0.766953 \\
thyroid\_sick & RandomOverSampler & 0.766101 \\
thyroid\_sick & NeighbourhoodCleaningRule & 0.766076 \\
thyroid\_sick & ADASYN & 0.761758 \\
thyroid\_sick & SMOTETomek & 0.758488 \\
thyroid\_sick & SMOTE & 0.756963 \\
thyroid\_sick & AllKNN & 0.756619 \\
thyroid\_sick & SVMSMOTE & 0.755433 \\
thyroid\_sick & EditedNearestNeighbours & 0.753655 \\
thyroid\_sick & BorderlineSMOTE & 0.749557 \\
thyroid\_sick & RepeatedEditedNearestNeighbours & 0.745291 \\
thyroid\_sick & CondensedNearestNeighbour & 0.732923 \\
thyroid\_sick & SMOTEENN & 0.726312 \\
thyroid\_sick & SMOTEN & 0.721404 \\
thyroid\_sick & RandomUnderSampler & 0.658317 \\
thyroid\_sick & InstanceHardnessThreshold & 0.646062 \\
thyroid\_sick & NearMiss & 0.450881 \\
\addlinespace[0.5em]
us\_crime & SMOTEN & 0.460008 \\
us\_crime & NeighbourhoodCleaningRule & 0.457170 \\
us\_crime & TomekLinks & 0.452950 \\
us\_crime & OneSidedSelection & 0.451024 \\
us\_crime & RepeatedEditedNearestNeighbours & 0.450062 \\
us\_crime & SVMSMOTE & 0.448927 \\
us\_crime & \textit{None} & 0.448593 \\
us\_crime & AllKNN & 0.448385 \\
us\_crime & EditedNearestNeighbours & 0.445048 \\
us\_crime & CondensedNearestNeighbour & 0.431910 \\
us\_crime & RandomUnderSampler & 0.429545 \\
us\_crime & RandomOverSampler & 0.429413 \\
us\_crime & SMOTE & 0.420780 \\
us\_crime & SMOTETomek & 0.420780 \\
us\_crime & BorderlineSMOTE & 0.416748 \\
us\_crime & SMOTEENN & 0.409801 \\
us\_crime & ADASYN & 0.403203 \\
us\_crime & InstanceHardnessThreshold & 0.391133 \\
us\_crime & NearMiss & 0.310530 \\
\addlinespace[0.5em]
webpage & OneSidedSelection & 0.716597 \\
webpage & TomekLinks & 0.716552 \\
webpage & \textit{None} & 0.715568 \\
webpage & CondensedNearestNeighbour & 0.709512 \\
webpage & NeighbourhoodCleaningRule & 0.694978 \\
webpage & EditedNearestNeighbours & 0.685888 \\
webpage & AllKNN & 0.676386 \\
webpage & RandomOverSampler & 0.666281 \\
webpage & RepeatedEditedNearestNeighbours & 0.661258 \\
webpage & SVMSMOTE & 0.645779 \\
webpage & SMOTE & 0.628776 \\
webpage & SMOTETomek & 0.628776 \\
webpage & SMOTEN & 0.608370 \\
webpage & BorderlineSMOTE & 0.593482 \\
webpage & ADASYN & 0.579732 \\
webpage & SMOTEENN & 0.549320 \\
webpage & RandomUnderSampler & 0.511892 \\
webpage & InstanceHardnessThreshold & 0.508713 \\
webpage & NearMiss & 0.211698 \\
\addlinespace[0.5em]
wine\_quality & RandomOverSampler & 0.287078 \\
wine\_quality & \textit{None} & 0.283333 \\
wine\_quality & SVMSMOTE & 0.275223 \\
wine\_quality & TomekLinks & 0.274099 \\
wine\_quality & NeighbourhoodCleaningRule & 0.273313 \\
wine\_quality & OneSidedSelection & 0.271907 \\
wine\_quality & AllKNN & 0.270325 \\
wine\_quality & EditedNearestNeighbours & 0.268315 \\
wine\_quality & BorderlineSMOTE & 0.268253 \\
wine\_quality & RepeatedEditedNearestNeighbours & 0.263043 \\
wine\_quality & SMOTETomek & 0.260419 \\
wine\_quality & SMOTE & 0.258939 \\
wine\_quality & ADASYN & 0.252852 \\
wine\_quality & CondensedNearestNeighbour & 0.239557 \\
wine\_quality & SMOTEENN & 0.231595 \\
wine\_quality & SMOTEN & 0.230773 \\
wine\_quality & InstanceHardnessThreshold & 0.229103 \\
wine\_quality & RandomUnderSampler & 0.206871 \\
wine\_quality & NearMiss & 0.057979 \\
\addlinespace[0.5em]
yeast\_me2 & NeighbourhoodCleaningRule & 0.367299 \\
yeast\_me2 & EditedNearestNeighbours & 0.366031 \\
yeast\_me2 & SVMSMOTE & 0.364783 \\
yeast\_me2 & BorderlineSMOTE & 0.356850 \\
yeast\_me2 & RepeatedEditedNearestNeighbours & 0.356482 \\
yeast\_me2 & AllKNN & 0.356166 \\
yeast\_me2 & OneSidedSelection & 0.352041 \\
yeast\_me2 & TomekLinks & 0.350277 \\
yeast\_me2 & SMOTE & 0.347099 \\
yeast\_me2 & \textit{None} & 0.342411 \\
yeast\_me2 & SMOTETomek & 0.342317 \\
yeast\_me2 & ADASYN & 0.339819 \\
yeast\_me2 & SMOTEENN & 0.309819 \\
yeast\_me2 & RandomOverSampler & 0.309086 \\
yeast\_me2 & CondensedNearestNeighbour & 0.307005 \\
yeast\_me2 & InstanceHardnessThreshold & 0.289938 \\
yeast\_me2 & RandomUnderSampler & 0.265066 \\
yeast\_me2 & SMOTEN & 0.234466 \\
yeast\_me2 & NearMiss & 0.113107 \\
\addlinespace[0.5em]
yeast\_ml8 & InstanceHardnessThreshold & 0.120360 \\
yeast\_ml8 & RandomUnderSampler & 0.108407 \\
yeast\_ml8 & TomekLinks & 0.108024 \\
yeast\_ml8 & RepeatedEditedNearestNeighbours & 0.106284 \\
yeast\_ml8 & OneSidedSelection & 0.105521 \\
yeast\_ml8 & SMOTEENN & 0.104856 \\
yeast\_ml8 & SVMSMOTE & 0.104654 \\
yeast\_ml8 & BorderlineSMOTE & 0.104453 \\
yeast\_ml8 & AllKNN & 0.104293 \\
yeast\_ml8 & RandomOverSampler & 0.104158 \\
yeast\_ml8 & EditedNearestNeighbours & 0.103038 \\
yeast\_ml8 & \textit{None} & 0.101586 \\
yeast\_ml8 & SMOTEN & 0.100680 \\
yeast\_ml8 & NeighbourhoodCleaningRule & 0.099834 \\
yeast\_ml8 & ADASYN & 0.098387 \\
yeast\_ml8 & SMOTE & 0.097896 \\
yeast\_ml8 & SMOTETomek & 0.097896 \\
yeast\_ml8 & NearMiss & 0.093707 \\
yeast\_ml8 & CondensedNearestNeighbour & 0.088049 \\
\end{longtable}

\clearpage

\section{Dataset Coverage}\label{app:datasets}

To avoid ambiguity between canonical dataset families and benchmark-specific variants, we list the exact 27 dataset identifiers used in the performance tables.

\begin{table}[h]
\centering
\caption{Dataset identifiers used in the benchmark (27 total).}
\label{tab:dataset_ids}
\scriptsize
\begin{tabular}{l}
\toprule
\textbf{Dataset ID} \\
\midrule
\texttt{abalone}, \texttt{abalone\_19}, \texttt{arrhythmia}, \texttt{car\_eval\_34}, \texttt{car\_eval\_4}, \\
\texttt{coil\_2000}, \texttt{ecoli}, \texttt{isolet}, \texttt{letter\_img}, \texttt{libras\_move}, \\
\texttt{mammography}, \texttt{oil}, \texttt{optical\_digits}, \texttt{ozone\_level}, \texttt{pen\_digits}, \\
\texttt{protein\_homo}, \texttt{satimage}, \texttt{scene}, \texttt{sick\_euthyroid}, \texttt{solar\_flare\_m0}, \\
\texttt{spectrometer}, \texttt{thyroid\_sick}, \texttt{us\_crime}, \texttt{webpage}, \texttt{wine\_quality}, \\
\texttt{yeast\_me2}, \texttt{yeast\_ml8}. \\
\bottomrule
\end{tabular}
\end{table}

\end{document}